\newcommand{\cmark}{\ding{51}}%
\newcommand{\xmark}{\ding{55}}%
\def\thm@space@setup{\thm@preskip=0.3cm
\thm@postskip=0.0cm}
\theoremstyle{plain}
\newtheorem{theorem}{Theorem}[section]
\newtheorem{proposition}[theorem]{Proposition}
\newtheorem{lemma}[theorem]{Lemma}
\newcounter{hyp}
\newtheorem{hypothesis}[hyp]{Hypothesis}
\newenvironment{proofsketch}{%
  \proof}{\endproof}
\theoremstyle{definition}
\newtheorem{definition}[theorem]{Definition}
\theoremstyle{remark}
\icmltitlerunning{Transport of Algebraic Structure to Latent Embeddings}
\newcommand\blfootnote[1]{%
  \begingroup
  \renewcommand\thefootnote{}\footnote{#1}%
  \addtocounter{footnote}{-1}%
  \endgroup
}
\newcommand\tablelinestretch{0.15cm}
\newcommand{\R}{{\mathbb R}}
\newcommand{\N}{{\mathbb N}}
\newcommand{\I}{\mathbbm{1}}
\DeclareMathOperator*{\E}{\mathbb{E}}
\newcommand{\vol}{{\protect\ooalign{\hfil$V$\hfil\cr\kern0.08em--\hfil\cr}}}
\newcommand{\pow}{\mathcal{P}}
\DeclareMathOperator{\loss}{Loss}
\DeclareMathOperator{\lossval}{\mathfrak{L}}
\newcommand{\Rl}{\mathbb{R}^l}	
\newcommand{\Rd}{\mathbb{R}^d}	
\DeclareMathOperator{\ar}{ar}	
\newcommand{\F}{\mathcal{F}}	
\renewcommand{\S}{\mathcal{S}}	
\newcommand{\M}{\mathcal{M}}	
\renewcommand{\L}{\mathcal{L}}	
\newcommand{\A}{\mathcal{A}}	
\newcommand{\B}{\mathcal{B}}	
\renewcommand{\P}{\mathcal{P}}	
\newcommand{\iso}{\varphi}	
\newcommand{\C}{\mathcal{C}}	
\newcommand{\Utrue}{U_{\textup{true}}}	
\newcommand{\Upred}{U_{\textup{pred}}}	
\newcommand{\zpred}{z_{\textup{pred}}}	
\DeclareMathOperator{\iou}{IoU}	
\DeclareMathOperator{\acc}{Acc}	
\newcommand{\FB}{\F_{\textup{Bool}}} 
\newcommand{\FD}{\F_{\textup{Dist}}} 
\newcommand\thickcdot[1][.75]{\mathbin{\ThisStyle{\vcenter{\hbox{%
  \scalebox{#1}{$\SavedStyle\bullet$}}}}}%
}	
\newcommand{\redcirc}{\tikz\draw[red,fill=red] (0,0) circle (.5ex);}
\newcommand{\bluecirc}{\tikz\draw[blue,fill=blue] (0,0) circle (.5ex);}
\newcommand{\greencirc}{\tikz\draw[green!50!black,fill=green!50!black] (0,0) circle (.5ex);}
\newcommand{\tightenborder}{\hspace*{-0.5em}} 
\newcommand{\encodercolor}{orange!30} 
\begin{document}

\twocolumn[
\icmltitle{Transport of Algebraic Structure to Latent Embeddings}



\icmlsetsymbol{equal}{*}

\begin{icmlauthorlist}
\icmlauthor{Samuel Pfrommer}{aff}
\icmlauthor{Brendon G.\ Anderson}{aff}
\icmlauthor{Somayeh Sojoudi}{aff}
\end{icmlauthorlist}

\icmlaffiliation{aff}{University of California, Berkeley}

\icmlcorrespondingauthor{Samuel Pfrommer}{\texttt{sam.pfrommer@berkeley.edu}}

\icmlkeywords{Universal Algebra, Transport of Structure, Latent Embeddings, Implicit Neural Representations}

\vskip 0.3in
]



\printAffiliationsAndNotice{}  

\begin{abstract}
Machine learning often aims to produce latent embeddings of inputs which lie in a larger, abstract mathematical space. For example, in the field of 3D modeling, subsets of Euclidean space can be embedded as vectors using implicit neural representations. Such subsets also have a natural algebraic structure including operations (e.g., union) and corresponding laws (e.g., associativity). How can we learn to ``union'' two sets using only their latent embeddings while respecting associativity? We propose a general procedure for parameterizing latent space operations that are provably consistent with the laws on the input space. This is achieved by learning a bijection from the latent space to a carefully designed \emph{mirrored algebra} which is constructed on Euclidean space in accordance with desired laws. We evaluate these \emph{structural transport nets} for a range of mirrored algebras against baselines that operate directly on the latent space. Our experiments provide strong evidence that respecting the underlying algebraic structure of the input space is key for learning accurate and self-consistent operations. 
\end{abstract}

\section{Introduction}
\label{sec: intro}
Algebraic structure underpins a wide range of interesting mathematical objects such as sets, functions, distributions, and symbolic strings. In machine learning (ML), these objects are often learned and subsequently embedded into Euclidean space for downstream tasks: consider embeddings of implicit neural representations (INRs) for sets \citep{de2023deep}, hypernetworks for functions \citep{ha2016hypernetworks}, conditional embeddings of generative architectures for probability distributions \citep{sohn2015learning,winkler2019learning,nichol2021glide}, and text embeddings for strings \citep{wang2022text,devlin2018bert}. Our goal is to enable mathematical operations from the underlying algebraic structure (e.g., set union when the underlying objects are sets) to be applied directly to latent embeddings in a way that respects axiomatic laws.

The importance of respecting mathematical structure has motivated machine learning developments of immense importance. Indeed, much of geometric deep learning is directly driven by symmetries in underlying objects \citep{bronstein2021geometric}. Graph neural networks learn functions that provably respect equivariance or invariance properties under node-relabeling graph isomorphisms \citep{maron2018invariant,azizian2020expressive}. The seminal DeepSet architecture enforces permutation invariance, reflecting the unordered nature of its finite set inputs \citep{zaheer2017deep}. Convolutional filters are also known to be approximately equivariant to translations in input images---a structure which naturally mirrors that of the underlying image manifold \citep{cohen2016group,cohen2019general,kondor2018generalization}.

This work is a first attempt to transport general algebraic structures from input data onto learned latent embeddings. We outline a general procedure for defining algebraic \emph{operations} on the latent space that respect \emph{laws} on the \emph{source space} (input space). Defining operations directly on latent space embeddings, rather than using the original source objects, is crucial for computational efficiency and compatibility with larger ML workflows. There has been some interest in algebraic and category theoretic approaches to the study of specific computational architectures and automatic differentiation \citep{martin2018algebraic,shiebler2021category,sennesh2023computing}, as well as in the application of ML to computational problems arising in algebra \citep{he2023learning}. However, to the best of our knowledge, our work provides the first general method to transport algebraic structures to learned embeddings.

We discuss our ideas using the language of \emph{universal algebra}, which studies algebraic structures as general pairings of a set with a collection of operations \citep{burris1981course}. We note that universal algebra is subsumed within category theory. As the universal algebraic perspective is sufficient here, we avoid generalizing to more complex category-theoretic frameworks.\blfootnote{Source code for our experiments is available \href{https://github.com/spfrommer/latent_algebras/tree/main}{on GitHub}.}

As our transport of algebraic structures relies on the construction of a bijection map, we leverage architectures from the invertible neural network literature. Our model of choice is the seminal NICE architecture, which uses coupling layers to enable easily-computable forward and inverse methods \citep{dinh2014nice}. These coupling layers have been shown to be universal diffeomorphism approximators \citep{teshima2020coupling}, and are best known for their usefulness in constructing normalizing flows \citep{papamakarios2021normalizing,kobyzev2020normalizing}. Since our application requires differentiation through the function inverse, other architectures which rely on solving fixed-point iterations to compute inverses are not considered \citep{behrmann2019invertible}.

We focus on embeddings of positive-volume subsets of $\Rd$ as a working example.
This is distinct from methods that consider finite sets, such as DeepSets \citep{zaheer2017deep}. 
Our setting is motivated by the practical application of learning shapes for 3D modeling and graphics \citep{park2019deepsdf}. Typical approaches parameterize a signed distance function or simply regress on a shape indicator function \citep{park2019deepsdf,mescheder2019occupancy,chen2019learning}. As the object surface is implicitly defined as a level set of the resulting network, this is termed an \emph{Implicit Neural Representation} (INR). A subsequent innovation that we adopt improves representation quality by introducing sinusoidal activations \citep{sitzmann2020implicit}. While implicit representations of shapes achieve strong performance for a variety of objects, the significant storage requirements of the corresponding networks are impractical for larger workflows. Recent research has addressed this by directly compressing INR weights into latent embeddings \citep{de2023deep}, enabling a variety of downstream tasks such as shape generation.

\subsection{Contributions}
Our work establishes the following contributions.
\begin{enumerate}
    \item We develop a general procedure for transporting algebraic structure from the source data to the latent embedding space. This is accomplished via a learned bijection to a carefully designed mirrored algebra.
    \item We illustrate the subtleties that arise with this procedure by considering algebras of sets as a case study. Namely, we mathematically prove that transporting all three basic set operations (union, intersection, and complementation) is infeasible and subsequently drop complementation, yielding a distributive lattice structure on the source space which is transportable.
    \item We experimentally validate \cref{hyp: lawsat} on this distributive lattice of sets, showing that adherence to source algebra laws is crucial for strong learned operation performance.
\end{enumerate}

\begin{tcolorbox}[width=\linewidth,enhanced,interior style={color={black!2}}]
\begin{hypothesis} \label{hyp: lawsat}
    Learned latent space operations will achieve higher performance if they are constructed to satisfy the laws of the underlying source algebra.
\end{hypothesis}
\end{tcolorbox}

\section{Universal algebra primer}

In this section, we briefly recall the pertinent definitions and notations used throughout this paper. We refer the reader to \citet{burris1981course} and \citet{wechler2012universal} for detailed texts concerning universal algebra.

\paragraph{Algebras and isomorphisms.}
Let $A$ be a nonempty set and $n$ a nonnegative integer. If $n=0$, we define $A^n = \{\emptyset\}$. A function $f \colon A^n \to A$ is called an \emph{$n$-ary operation on $A$}, and $n$ is called the \emph{arity of $f$}. If the arity of $f$ is $1$, then $f$ is called a \emph{unary operation}, and if the arity of $f$ is $2$, then $f$ is called a \emph{binary operation}. If the arity of $f$ is $0$, then $f$ is called a \emph{nullary operation}, which may be identified with an element of $A$. We will commonly denote nullary operations, unary operations, and binary operations by $f = f(\emptyset)$, $f a = f(a)$, and $a f b = f(a,b)$, respectively.

A \emph{type} is a set $\F$, whose elements are called \emph{operation symbols}, together with a function $\ar \colon \F \to \N\cup\{0\}$. If $f\in \F$ and $\ar(f) = n$, then an $n$-ary operation $f^{\A} \colon A^n \to A$ is called a \emph{realization of $f$ on $A$}.

An \emph{algebra of type $\F$} is an ordered pair $\A = (A,\F^{\A})$ with $A$ being a nonempty set and $\F^{\A} = \{f^{\A} : f\in \F\}$ being a family of realizations $f^{\A}$ of operation symbols $f$ on $A$, and with $\F^{\A}$ in one-to-one correspondence with $\F$.

One of the most fundamental algebras is a \emph{group}, which is an algebra $(A,\thickcdot,{}^{-1},e)$ whose operations satisfy
\begin{align}
	e \thickcdot a &= a, \label{eq: G1} \tag{G1} \\
	(a^{-1}) \thickcdot a &= e, \label{eq: G2} \tag{G2} \\
	(a\thickcdot b) \thickcdot c &= a\thickcdot (b\thickcdot c), \label{eq: G3} \tag{G3}
\end{align}
for all $a,b,c\in A$. Here, $\thickcdot$ is a binary operation, ${}^{-1}$ is a unary operation, and $e$ is a nullary operation. The equations \eqref{eq: G1}, \eqref{eq: G2}, and \eqref{eq: G3} are the group's underlying \emph{laws}, which we will define shortly. We use the term \emph{algebraic structure} to refer to a combination of a type and a collection of laws.


Consider two algebras $\A = (A,\F^{\A})$ and $\B = (B,\F^{\B})$ of type $\F$. A function $\iso \colon A \to B$ is called an \emph{homomorphism from $\A$ to $\B$} if it satisfies
\begin{equation*}
	\iso(f^{\A}(a_1,\dots,a_n)) = f^{\B}(\iso(a_1),\dots,\iso(a_n))
\end{equation*}
for all $f\in\F$ and all $a_1,\dots,a_n\in A$, where of course $n=\ar(f)$. If, additionally, $\iso$ is bijective, then it is called an \emph{isomorphism from $\A$ to $\B$}. If $\A$ is isomorphic to $\B$ (meaning there is an isomorphism $\iso$ from $\A$ to $\B$), then we write $\A\cong \B$. Isomorphic algebras satisfy the same laws, and hence can be viewed as the same algebraic structures.

Two algebras may be of the same type yet not be isomorphic, and thus have fundamentally different structures. For example, rings and lattices are distinct algebraic structures of common type $\F = \{f_1,f_2\}$ with $\ar(f_1)=\ar(f_2)=2$.

\paragraph{Terms and laws.}
For a set of \emph{variables} $X$ and a type $\F$, the set $T_{\F}(X)$ is the set of \emph{terms of type $\F$ over $X$} and consists of all strings of variables in $X$ and nullary operations in $\F$, connected by $n$-ary operations. For example, consider a type $\F$ with one binary operation $\thickcdot$ and a nullary operation $e$. If $X = \{x, y\}$, then $x$, $y$, $e$, $x \thickcdot y$, $x \thickcdot (y \thickcdot e)$, and $x \thickcdot (x \thickcdot y)$ are all examples of terms in $T_{\F}(X)$.

Note that a term $p(x_1, \dots, x_n) \in T_{\F}(X)$ is defined independently of any specific algebra of type $\F$. Making the term concrete for a particular algebra $\A = (A,\F^{\A})$ of type $\F$ yields a \emph{term function} $p^{\A} \colon A^n \to A$. Namely, $p^{\A}(a_1, \dots, a_n)$ substitutes $a_i \in A$ for $x_i$ in the term $p(x_1, \dots, x_n)$, and recursively evaluates using the realized operations from $\A$. Continuing the previous example, let $\A$ be the group of the real numbers equipped with the standard addition operation. The term $p(x, y) = x \thickcdot (y \thickcdot e)$ would yield the term function given by $p^{\A}(a,b) = a + (b + 0)$.

We call two terms $p(x_1,\dots,x_n),q(x_1,\dots,x_n)\in T_{\F}(X)$ \emph{equivalent} with respect to an algebra $\A$ if, for all $a_i\in A$, it holds that $p^{\A}(a_1,\dots,a_n) = q^{\A}(a_1,\dots,a_n)$.

A \emph{law} $R$ for a type $\F$ is now defined as the equality of two terms $p(x_1,\dots,x_n), q(x_1,\dots,x_n) \in T_{\F}(X)$:
\begin{align*} \label{eq: lawdef}
    R: p(x_1, \dots, x_n) = q(x_1, \dots, x_n). 
\end{align*}
We use $R$ instead of the more common letter $L$, which we reserve for referring to latent spaces. For our running example, the commutative law for the underlying type $\F = \{\thickcdot,{}^{-1},e\}$ over a set of variables $X = \{x,y\}$ is given by
\begin{equation*}
	x\thickcdot y = y \thickcdot x.
\end{equation*}

Finally, we say that an algebra $\A$ of type $\F$ \emph{satisfies}, or \emph{respects}, a law $R : p(x_1,\dots,x_n) = q(x_1,\dots,x_n)$ if the law holds for realizations of the terms as term functions:
\begin{align*}
    R^{\A}: p^{\A}(a_1, \dots, a_n) = q^{\A}(a_1, \dots, a_n) ~ \text{for all $a_i\in A$}.
\end{align*}
It is clear that the group of reals under addition satisfies the commutative law, since $a + b = b + a$ for all $a,b\in \R$.





\section{Method} \label{sec: method}

With the framework of universal algebra now developed, we may formally describe the goal of this paper. Consider a machine learning task in which input data is drawn from a \emph{source algebra} $\S = (S,\F^{\S})$ of type $\F$. The canonical example we consider is that where input data takes the form of a set, and hence has associated operations of intersection, union, and complementation. The typical ML pipeline embeds source data from the source space $S$ into a Euclidean latent space $L = \Rl$. However, such latent space embeddings do not respect the algebraic structures encoded in $\S$; they are only endowed with the unrelated vector space structure of $\Rl$. Thus, the goal of this paper is as follows:

\begin{center}
\emph{Transport the algebraic structure $\S$ of the source space $S$ onto the latent space $L$.} 
\end{center}

Specifically, we seek to transport both the operations and laws of $\S$ onto $L$.
We emphasize that our goal of structural transport is distinct from constructing an isomorphism (or even a nontrivial homomorphism) $S \to L$; this is not generally possible, since $S$ is problem-determined and our setting assumes a pretrained encoder-decoder architecture which fixes $L$.

\SetKwInput{KwIn}{In}
\SetKwInput{KwOut}{Out}
\SetKwComment{Comment}{\# }{}
\begin{algorithm}
\caption{Transport of algebraic structure from $\S$ to $L$}
\label{alg: method}
\DontPrintSemicolon
\KwIn{Source alg.\ $\S$, latent space $L$, encoder $E$, decoder $D$}
\KwOut{Latent algebra $\L$}
\vspace*{0.2cm}

Fix mirrored space $M= \Rl$\; 

Select mirrored algebra $\M$\; \Comment*[r]{Same type as $\S$}

Parameterize bijection $\iso$\;

Define induced latent algebra $\L$\; \Comment*[r]{Via \eqref{eq: induced_operation}}

Learn parameters of $\iso$\; \Comment*[r]{Via \eqref{eq: learning}}
\end{algorithm}

\paragraph{Description of the method.} The general steps of our method are described in \cref{alg: method}, with a corresponding visualization in \cref{fig: method}. We assume that there is a fixed encoder $E \colon S \to L$ mapping source data to latent embeddings and a corresponding decoder $D$ (e.g., a pretrained autoencoder-style network).
To transport the algebraic structure from the source algebra $\S$ to the latent space $L$, we propose to learn a bijective map $\iso$ from $L$ to another space $M= \Rl$ of the same dimension. We may consider $M$ as an ``alternative latent space,'' albeit one in which we have complete design authority to impose operations that turn $M$ into an algebra $\M = (M,\F^{\M})$ of the same type $\F$ as $\S$.
Although we focus on the pretrained encoder-decoder setting for maximum flexibility, it is certainly possible to jointly learn $\iso$ together with the $E$ and $D$ in practice.

Concretely, we endow our \emph{mirrored space} $M$ with an $n$-ary operation $f^{\M}$ for each $n$-ary operation $f^{\S}$ from the source algebra. For an exemplar $\S$ with group structure, we would define one binary operation $\thickcdot^{\M}: \Rl \times \Rl \to \Rl$, one unary operation $({}^{-1})^{\M}: \Rl \to \Rl$, and one nullary operation identified with some element $e^{\M} \in \Rl$.


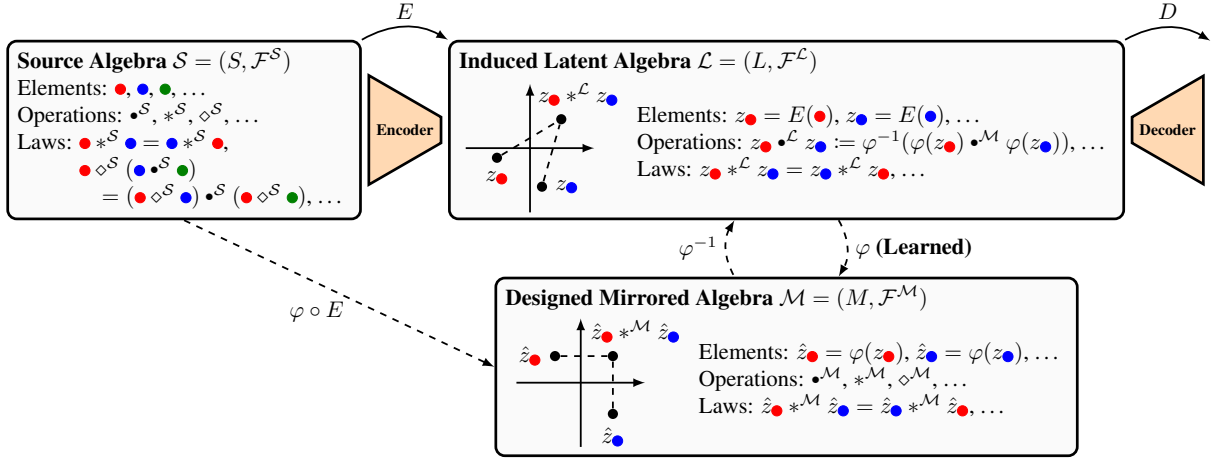
\begin{figure*}
    \begin{center}
	\resizebox{0.95\textwidth}{!}{
\begin{tikzpicture}
    [src/.style={very thick,black,fill=black!2,draw,rounded corners},
    enc/.style={trapezium,very thick,black,fill=\encodercolor,draw},
    lat/.style={very thick,black,fill=black!2,draw,rounded corners},
    mir/.style={very thick,black,fill=black!2,draw,rounded corners},
    dec/.style={trapezium,very thick,black,fill=\encodercolor,draw}]

    \node [src] (src) at (0,0) {
	    	\tightenborder
		\begin{tabular}{l}
		\textbf{Source Algebra $\S = (S,\F^{\S})$}\\
		Elements: \redcirc, \bluecirc, \greencirc, \dots\\
		Operations: $\thickcdot^{\S}$, $*^{\S}$, $\diamond^{\S}$, \dots\\
		Laws: $\redcirc *^{\S} \bluecirc = \bluecirc *^{\S} \redcirc$,\\
		\phantom{Laws: }$\redcirc \diamond^{\S} (\bluecirc \thickcdot^{\S} \greencirc)$\\
		\phantom{Laws: }\quad$= (\redcirc \diamond^{\S} \bluecirc) \thickcdot^{\S} (\redcirc \diamond^{\S} \greencirc)$, \dots
		\end{tabular}
	    	\tightenborder
		};

    \node[enc,
    shape border rotate = 270,
    trapezium stretches = true,
    minimum width = 1cm, 
    minimum height = 1cm,
    right=0.25em of src] (enc) {\scriptsize\textbf{Encoder}};

    \node [lat,right=0.25em of enc] (lat) {
	    	\tightenborder
		\begin{tabular}{l@{\hskip 0.25em}l}
		\multicolumn{2}{l}{\textbf{Induced Latent Algebra $\L = (L,\F^{\L})$}}\\
		\multirow{5}{*}{
			\begin{tikzpicture}
				\draw[-latex,thick] (-1,0) -- (1,0);
				\draw[-latex,thick] (0,-1) -- (0,1);
				\node[circle,fill=black,inner sep=0ex,minimum size=1ex,label=below:{$z_{\redcirc}$}] (x) at (-0.6,-0.15) {};
				\node[circle,fill=black,inner sep=0ex,minimum size=1ex,label=right:{$z_{\bluecirc}$}] (y) at (0.1,-0.6) {};
				\node[circle,fill=black,inner sep=0ex,minimum size=1ex,label=above:{\hspace*{1.5em}$z_{\redcirc} *^{\L} z_{\bluecirc}$}] (xy) at (0.4,0.45) {}; 
				\draw[thick,dashed] (x) -- (xy);
				\draw[thick,dashed] (y) -- (xy);
			\end{tikzpicture}
		}&\\
		&Elements: $z_{\redcirc} = E(\redcirc)$, $z_{\bluecirc} = E(\bluecirc)$, \dots\\
		&Operations: $z_{\redcirc} \thickcdot^{\L} z_{\bluecirc} \coloneqq \iso^{-1}(\iso(z_{\redcirc}) \thickcdot^{\M} \iso(z_{\bluecirc}))$, \dots\\
		&Laws: $z_{\redcirc} *^{\L} z_{\bluecirc} = z_{\bluecirc} *^{\L} z_{\redcirc}$, \dots\\
		&
		\end{tabular}
	    	\tightenborder
		};

    \node [mir,below=2.5em of lat] (mir) {
	    	\tightenborder
		\begin{tabular}{l@{\hskip 0.25em}l}
		\multicolumn{2}{l}{\textbf{Designed Mirrored Algebra $\M = (M,\F^{\M})$}}\\
		\multirow{5}{*}{
			\begin{tikzpicture}
				\draw[-latex,thick] (-1,0) -- (1,0);
				\draw[-latex,thick] (0,-1) -- (0,1);
				\node[circle,fill=black,inner sep=0ex,minimum size=1ex,label=left:{$\hat{z}_{\redcirc}$}] (x) at (-0.4,0.5) {};
				\node[circle,fill=black,inner sep=0ex,minimum size=1ex,label=below:{$\hat{z}_{\bluecirc}$}] (y) at (0.5,-0.4) {};
				\node[circle,fill=black,inner sep=0ex,minimum size=1ex,label=above:{\hspace*{2em}$\hat{z}_{\redcirc} *^{\M} \hat{z}_{\bluecirc}$}] (xy) at (0.5,0.5) {}; 
				\draw[thick,dashed] (x) -- (xy);
				\draw[thick,dashed] (y) -- (xy);
			\end{tikzpicture}
		}&\\
		&Elements: $\hat{z}_{\redcirc}=\iso(z_{\redcirc})$, $\hat{z}_{\bluecirc}=\iso(z_{\bluecirc})$, \dots\\
		&Operations: $\thickcdot^{\M}$, $*^{\M}$, $\diamond^{\M}$, \dots\\
		&Laws: $\hat{z}_{\redcirc} *^{\M} \hat{z}_{\bluecirc} = \hat{z}_{\bluecirc} *^{\M} \hat{z}_{\redcirc}$, \dots\\
		&
		\end{tabular}
	    	\tightenborder
		};

    \node[dec,
    shape border rotate = 90,
    trapezium stretches = true,
    minimum width = 1cm, 
    minimum height = 1cm,
    right=0.25em of lat] (dec) {\scriptsize\textbf{Decoder}};
    \node [right = 0.25em of dec] (phantom) {\vphantom{
		\begin{tabular}{l}
			1 \\ 2 \\ 3 \\ 4 \\ 5 \\ 6
		\end{tabular}
}};

    \path[latex-,draw,dashed,thick] (lat) to [bend right] node [midway,left] {$\iso^{-1}$} (mir);
    \path[-latex,draw,dashed,thick] (lat) to [bend left] node [midway,right] {$\iso$ \textbf{(Learned)}} (mir);
    \path[-latex,draw,dashed,thick] (src.south) to node [midway,below,xshift=-1em] {$\iso \circ E$} (mir.west);
    \path[-latex,draw,thick] (src.north east) to [bend left] node [midway,above] {$E$} (lat.north west);
    \path[-latex,draw,thick] (lat.north east) to [bend left] node [midway,above] {$D$} (phantom.north west);
\end{tikzpicture} }
    \end{center}
    \caption{The proposed method for transporting algebraic structure from $\S$ onto the latent space $L$. The bijection $\varphi$ is learned (hence the dashed arrows) in such a way as to best ``align'' the latent structure $\L$, induced from $\M$, with the given source structure $\S$. All other components are either fixed (e.g., the encoder and decoder) or designed \textit{a priori} (e.g., the mirrored algebra).}
    \label{fig: method}
	\vspace*{0.2cm}
\end{figure*}

We refer to the constructed $\M$ as the \emph{mirrored algebra}. Although it is always possible to endow $M$ with an algebra of the same type as $\S$, it is generally not possible to ensure that the resulting algebra $\M$ is isomorphic to $\S$. This may either be due to the fact that $S$ has cardinality strictly greater than $M$ (due to the embedding process $E$), or due to inherent incompatibilities between the laws of $\S$ and the natural Euclidean structure on $M$. Such incompatibilities are discussed in further detail with our case study in \cref{sec: set_case_study}. We note that the term ``mirrored algebra'' is our own and should not be conflated with other concepts in the literature.


We now transport the structure of our designed mirrored algebra $\M$ to the latent space $L$ via a learned bijection $\iso: L \to M$. Bijectivity is ensured by parameterizing $\iso$ as an invertible neural network using the architecture proposed in \citet{dinh2014nice}. This automatically induces an algebraic structure from $\M$ onto $L$. Namely, for every $n$-ary operation $f^{\M} \in \F^{\M}$, we define the realization $f^{\L} : L^n \to L$ of the corresponding operation symbol $f$ by
\begin{equation}
	f^{\L}(z_1,\dots,z_n) \coloneqq \iso^{-1}\big(f^{\M}(\iso(z_1),\dots,\iso(z_n))\big),
	\label{eq: induced_operation}
\end{equation}
for $z_1,\dots,z_n \in L$ and $\ar(f) = n$. Intuitively, the operation $f^{\L}$ is implemented by mapping latent embeddings into the mirrored space $M$, performing the corresponding operation $f^{\M}$ on these mirrored embeddings, and then pulling the result back to the latent space $L$. Of course, if $f^{\M}$ is a nullary operation $M$, then we define the corresponding operation $f^{\L}$ to be the nullary operation on $L$ given by $f^{\L}(\emptyset) = \iso^{-1}(f^{\M}(\emptyset))$.

\paragraph{Learning $\iso$.}
We briefly describe the process of learning $\iso$ to ``align'' the induced latent algebra $\L$ with the source algebra $\S$. Aligning $\L$ to $\S$ may be viewed as learning $\iso$ so that the laws of $\S$ are also satisfied by $\L$. To achieve this alignment, it suffices to align individual terms realized by $\S$ and $\L$, as laws are just equalities between terms. We propose the following procedure, which is illustrated in \cref{fig: learning}.

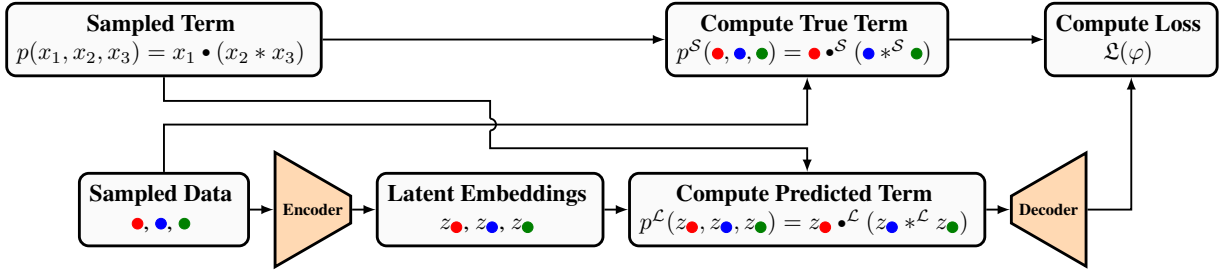
\begin{figure*}
    \begin{center}
	\resizebox{0.95\textwidth}{!}{
\def\radius{0.25em}
\begin{tikzpicture}
    [term/.style={very thick,black,fill=black!2,draw,rounded corners},
    data/.style={very thick,black,fill=black!2,draw,rounded corners},
    enc/.style={trapezium,very thick,black,fill=\encodercolor,draw},
    lat/.style={very thick,black,fill=black!2,draw,rounded corners},
    true/.style={very thick,black,fill=black!2,draw,rounded corners},
    pred/.style={very thick,black,fill=black!2,draw,rounded corners},
    loss/.style={very thick,black,fill=black!2,draw,rounded corners},
    dec/.style={trapezium,very thick,black,fill=\encodercolor,draw}]

    \node [term] (term) at (0,0) {
	    	\tightenborder
		\begin{tabular}{c}
		\textbf{Sampled Term}\\
		$p(x_1,x_2,x_3) = x_1 \thickcdot (x_2 * x_3)$
		\end{tabular}
	    	\tightenborder
		};

    \node [data,below=4em of term] (data) {
	    	\tightenborder
		\begin{tabular}{c}
		\textbf{Sampled Data}\\
		\redcirc, \bluecirc, \greencirc
		\end{tabular}
	    	\tightenborder
		};

    \node[enc,
    shape border rotate = 270,
    trapezium stretches = true,
    minimum width = 1cm, 
    minimum height = 1cm,
    right=1em of data] (enc) {\scriptsize\textbf{Encoder}};

    \node [lat,right=1em of enc] (lat) {
	    	\tightenborder
		\begin{tabular}{c}
		\textbf{Latent Embeddings}\\
		$z_{\redcirc}$, $z_{\bluecirc}$, $z_{\greencirc}$
		\end{tabular}
	    	\tightenborder
		};

    \node [pred,right=1em of lat] (pred) {
	    	\tightenborder
		\begin{tabular}{c}
		\textbf{Compute Predicted Term}\\
		$p^{\L}(z_{\redcirc},z_{\bluecirc},z_{\greencirc}) = z_{\redcirc} \thickcdot^{\L} (z_{\bluecirc} *^{\L} z_{\greencirc})$
		\end{tabular}
	    	\tightenborder
		};

    \node [true,above=4em of pred] (true) {
	    	\tightenborder
		\begin{tabular}{c}
		\textbf{Compute True Term}\\
		$p^{\S}(\redcirc,\bluecirc,\greencirc) = \redcirc \thickcdot^{\S} (\bluecirc *^{\S} \greencirc)$
		\end{tabular}
	    	\tightenborder
		};

    \node [loss,right=4em of true] (loss) {
	    	\tightenborder
		\begin{tabular}{c}
		\textbf{Compute Loss}\\
		$\lossval(\iso)$
		\end{tabular}
	    	\tightenborder
		};

    \node[dec,
    shape border rotate = 90,
    trapezium stretches = true,
    minimum width = 1cm, 
    minimum height = 1cm,
    right=1em of pred] (dec) {\scriptsize\textbf{Decoder}};

    \node [above=2em of lat,shape=coordinate] (phantom) {};
    \node [above=3em of lat,shape=coordinate] (phantom1) {};
    \node [above=1em of lat,shape=coordinate] (phantom2) {};
    \node [above=2.25em of lat,shape=coordinate] (phantom3) {};
    \node [above=1.75em of lat,shape=coordinate] (phantom4) {}; 

    \path[-latex,draw,thick] (term.east) to (true.west);
    \path[-latex,draw,thick] (true.east) to (loss.west);
    \path[-latex,draw,thick] (data.east) to (enc.west);
    \draw[-latex,draw,thick,name path=line 1] (data.north) |- (phantom) -| (true.south);
    \path[-latex,draw,thick] (enc.east) to (lat.west);
    \path[-latex,draw,thick] (lat.east) to (pred.west);
    \path[-latex,draw,thick] (pred.east) to (dec.west);
    \draw[-latex,draw,thick] (dec.east) -| (loss.south);

    \draw[draw,thick] (term.south) |- (phantom1) -- (phantom3);
    \draw[-latex,draw,thick] (phantom4) -- (phantom2) -| (pred.north);
    \path[name path=line 2] (phantom3) -- (phantom4);
    \path[name intersections={of = line 1 and line 2}];
    \coordinate (S)  at (intersection-1);
    \path[name path=circle] (S) circle(\radius);
    \path [name intersections={of = circle and line 2}];
    \coordinate (I1)  at (intersection-1);
    \coordinate (I2)  at (intersection-2);
    \tkzDrawArc[color=black,thick](S,I2)(I1);
\end{tikzpicture} }
    \end{center}
    \caption{The bijection $\iso$ is learned to align true sampled terms $p_i^{\S}(s_1,\dots,s_{n_i})$ with predicted terms $D(p_i^{\L}(E(s_1),\dots,E(s_{n_i})))$.}
	\vspace*{0.4cm}
    \label{fig: learning}
\end{figure*}

Let $p_i(x_1,\dots,x_{n_i})\in T_{\F}(X_i)$ be a ``sampled'' term of type $\F$ over a variable set $X_i$. The manner in which this term is sampled is task-dependent, but it suffices to identify this term as a random string involving operation symbols from $\F$ and variables from $X_i$---see \cref{sec: experiments} for concrete examples. Next, consider data $s_1,\dots,s_{n_i} \in S$ sampled from the source space. The term is first realized on this source data by computing $p_i^{\S}(s_1,\dots,s_{n_i})$. The term is then also realized by the induced latent algebra as $D(p_i^{\L}(z_1,\dots,z_{n_i}))$, with $z_j = E(s_j)$. The loss between this prediction and the ground truth, as a function of the bijection $\iso$, is given by
\begin{equation*}
	\lossval_i (\iso) \coloneqq \loss\big(D(p_i^{\L}(z_1,\dots,z_{n_i})), p_i^{\S}(s_1,\dots,s_{n_i})\big),
\end{equation*}
for some appropriately chosen loss function $\loss$.

For example, if $S$ is the power set of $\Rd$ equipped with intersection and union, the true sampled term might be realized as $p_i^{\S}(s_1,s_2,s_3) = s_1\cap^{\S}(s_2\cup^{\S} s_3)$ for some subset data $s_1,s_2,s_3\subseteq\Rd$, where $\cap^{\S}$ and $\cup^{\S}$ are actual set intersection and union operations, and the corresponding predicted term would be given by $D(E(s_1)\cap^{\L}(E(s_2) \cup^{\L} E(s_3)))$, where $\cap^{\L}$ and $\cup^{\L}$ are the intersection and union realized in Euclidean space by efficient arithmetic operations.

The final learning problem then amounts to solving
\begin{equation}
	\inf_{\iso \in \Phi} \frac{1}{N} \sum_{i=1}^N \lossval_i(\iso),
	\label{eq: learning}
\end{equation}
for some parameterized class $\Phi$ of bijections.

\paragraph{Theoretical developments.}

\textEnd{
\subsection{Proofs for \cref{sec: method}} \label{app: method}
}

Our method comes equipped with theoretical guarantees that the induced latent algebra respects the underlying source algebra. First, we show that the induced algebra is \emph{always} isomorphic to the mirrored algebra by construction. Full proofs for all results are provided in \cref{app: proofs}.

\begin{propositionE}[][end,restate,text link=]
	\label{prop: isomorphism}
	Suppose that $L,M = \Rl$ and that $\iso\colon L\to M$ is a bijection. Let $\M=(M,\F^{\M})$ be an algebra of type $\F$ and define the family $\F^{\L} \coloneqq \{f^{\L} : f\in \F\}$ of $n$-ary operations on $L$ by \eqref{eq: induced_operation}. Then, $\iso$ is an isomorphism from the induced algebra $\L = (L,\F^{\L})$ to $\M$.
\end{propositionE}

\begin{proofE}
	Let $f\in \F$, let $n = \ar(f)$, and consider the realization $f^{\M}$ on $M$ and the realization $f^{\L}$ on $L$ induced by \eqref{eq: induced_operation}. Let $z_1,\dots,z_n\in L$. We have that
	\begin{align*}
		\iso(f^{\L}(z_1,\dots,z_n)) &= \iso(\iso^{-1}(f^{\M}(\iso(z_1),\dots,\iso(z_n)))) \\
		&= f^{\M}(\iso(z_1),\dots,\iso(z_n))
	\end{align*}
	by construction of the operation $f^\L$. Hence, we see that $\iso$ is an isomorphism from the induced algebra $\L$ to the algebra $\M$.
\end{proofE}

As a consequence of \cref{prop: isomorphism}, a well-constructed mirrored space induces an algebra $\L$ such that laws on the source space are satisfied.

\begin{theoremE}[][end,restate,text link=] 
	\label{thm: source_latent}
	Consider a source algebra $\S = (S, \F^{\S})$ of type $\F$, and let $\M = (M,\F^{\M})$ be a mirrored space such that every law $R$ satisfied by $\S$ is also satisfied by $\M$. Then, the induced latent algebra $\L$, defined by \eqref{eq: induced_operation}, also satisfies every such law $R$, for any bijection $\iso \colon L \to M$.
\end{theoremE}
\begin{proofE}
	Let $\iso \colon L\to M$ be a bijection, and let $\L$ be the induced latent algebra defined by \eqref{eq: induced_operation}. Let $R$ be a law that is satisfied by $\S$ (and hence satisfied by $\M$), given by
	\begin{equation*}
		p(x_1,\dots,x_n) = q(x_1,\dots,x_n).
	\end{equation*}
	By \cref{prop: isomorphism}, $\iso$ is an isomorphism from $\L$ to $\M$. Let $f\in\F$ be an arbitrary operation symbol. Then, by the properties of isomorphisms, it must be that
	\begin{equation*}
		\iso(f^{\L}(z_1,\dots,z_n)) = f^{\M}(\iso(z_1),\dots,\iso(z_n))
	\end{equation*}
	for all $z_1,\dots,z_n\in L$. Thus, it holds that
	\begin{equation*}
		\iso(p^{\L}(z_1,\dots,z_n)) = p^{\M}(\iso(z_1),\dots,\iso(z_n))
	\end{equation*}
	for all $z_1,\dots,z_n\in L$, and similarly,
	\begin{equation*}
		\iso(q^{\L}(z_1,\dots,z_n)) = q^{\M}(\iso(z_1),\dots,\iso(z_n))
	\end{equation*}
	for all such $z_1,\dots,z_n$. Therefore, since $\M$ satisfies the law $R$, we conclude that
	\begin{equation*}
		\iso(p^{\L}(z_1,\dots,z_n)) = \iso(q^{\L}(z_1,\dots,z_n))
	\end{equation*}
	for all $z_1,\dots,z_n\in L$. Hence, by invertibility of $\iso$, we also find that
	\begin{equation*}
		p^{\L}(z_1,\dots,z_n) = q^{\L}(z_1,\dots,z_n)
	\end{equation*}
	for all $z_1,\dots,z_n\in L$, and therefore $\L$ satisfies the law $R$.
\end{proofE}
\begin{proofsketch}
	For a law 
	$p(x_1, \dots, x_n) = q(x_1, \dots, x_n)$
	which is satisfied by $\mathcal{M}$, we want to show that 
	$p^{\mathcal{L}}(z_1, \dots, z_n) = q^{\mathcal{L}}(z_1, \dots, z_n)$
	for all $z_i \in L$. Proposition 3.1 implies that 
	\[
		\varphi(p^{\mathcal{L}}(z_1, \dots, z_n)) = p^{\mathcal{M}}(\varphi(z_1), \dots, \varphi(z_n)).
	\]
	After applying a similar procedure to $q$, we can use the fact that $R$ is satisfied by $\mathcal{M}$ to conclude that 
	\[
		\varphi(p^{\mathcal{L}}(z_1, \dots, z_n)) = \varphi(q^{\mathcal{L}}(z_1, \dots, z_n)).
	\]
	Inverting by $\varphi$ concludes the proof.	
\end{proofsketch}

Unfortunately, there is no general guarantee that an isomorphism, or even a nontrivial homomorphism, exists from the source algebra $\S$ to the induced algebra on $L$, even when the mirrored algebra satisfies the same laws as $\S$.

\begin{propositionE}[][end,restate,text link=]
	\label{prop: nonhomomorphic}
	There exists a source algebra $\S = (S,\F^{\S})$ and a mirrored algebra $\M=(M,\F^{\M})$ with $M=\Rl$, both of the same type $\F$, such that $\M$ satisfies every law $R$ that $\S$ satisfies, and, for all bijections $\iso\colon L\to M$, there is no nontrivial homomorphism $\chi \colon S\to L$ when $L=\Rl$ is equipped with the algebra induced by $\M$ via \eqref{eq: induced_operation}.
\end{propositionE}

\begin{proofE}
	We prove the claim by construction. Consider the source algebra $\S = (\R,\thickcdot)$, with a sole binary operation $\thickcdot$ defined by
	\begin{equation*}
		s_1 \thickcdot s_2 = |s_1| s_2,
	\end{equation*}
	where, of course, $|s_1|$ represents the absolute value of the real number $s_1$, and $|s_1| s_2$ represents the usual product of the two real numbers $|s_1|$ and $s_2$. This source algebra is a semigroup, namely, $\R$ is closed under the binary operation $\thickcdot$, and $\thickcdot$ satisfies the associativity law, since
	\begin{align*}
		s_1\thickcdot (s_2\thickcdot s_3) &= |s_1|(s_2\thickcdot s_3) \\
						  &= |s_1| (|s_2|s_3) \\
						  &= |s_1 s_2|s_3 \\
						  &= \big| |s_1| s_2 \big| s_3 \\
						  &= (|s_1|s_2)\thickcdot s_3 \\
						  &= (s_1\thickcdot s_2)\thickcdot s_3
	\end{align*}
	for all $s_1,s_2,s_3\in\R$. Now, consider the mirrored algebra $\M = (\R,+)$, with $+$ being the standard addition operation on the real numbers. Obviously, $\M$ is also a semigroup, since $\R$ is closed under $+$, and $+$ is associative.

	We now show that $\M$ satisfies every law $R$ that $\S$ does. Let $R$ be a law for type $\F$ defined by
	\begin{equation*}
		R : p(x_1,\dots,x_n) = q(x_1,\dots,x_n)
	\end{equation*}
	for some arbitrary terms $p(x_1,\dots,x_n), q(x_1,\dots,x_n)\in T_{\F}(X)$. Suppose that $\S$ satisfies the law $R$. Then,
	\begin{equation*}
		p^{\S}(s_1,\dots,s_n) = q^{\S}(s_1,\dots,s_n)
	\end{equation*}
	for all $s_i\in \R$. Since the associative binary operation $\thickcdot$ is the only operation in $\F^{\S}$, it must be that the term function $p^{\S}$ is given by some repeated application of $\thickcdot$:
	\begin{equation*}
		p^{\S}(s_1,\dots,s_n) = s_{i_1}\thickcdot s_{i_2} \thickcdot \cdots \thickcdot s_{i_{m_p}}
	\end{equation*}
	for some $m_p\in\N$ and some tuple $(i_1,\dots,i_{m_p})\in\{1,\dots,n\}^{m_p}$. Similarly,
	\begin{equation*}
		q^{\S}(s_1,\dots,s_n) = s_{j_1}\thickcdot s_{j_2} \thickcdot \cdots \thickcdot s_{j_{m_q}}
	\end{equation*}
	for some $m_q\in \N$ and some tuple $(j_1,\dots,j_{m_p})\in\{1,\dots,n\}^{m_p}$. Thus,
	\begin{equation}
		|s_{i_1} \cdots s_{i_{m_p}-1} | s_{i_{m_p}} = |s_{j_1} \cdots s_{j_{m_q}-1} | s_{j_{m_q}}.
		\label{eq: semigroup_law}
	\end{equation}
	If $m_p > m_q$, then there exists some factor $s_i$ appearing in the product $|s_{i_1}\cdots s_{i_{m_p}-1}| s_{i_{m_p}}$ at least once more than it does in the product $|s_{j_1}\cdots s_{j_{m_q}-1}| s_{j_{m_q}}$. Thus, if the equality \eqref{eq: semigroup_law} holds for some $s_1,\dots,s_n\in S$, doubling this particular value $s_i$ would result in the law being violated, as the left-hand side of \eqref{eq: semigroup_law} would have an extra factor of $2$ that the right-hand side would not. This implies that $m_p \le m_q$. Analogous reasoning shows that $m_q \le m_p$, and hence it must be that $m_p = m_q$; the same number of factors appear in the left-hand and right-hand sides of the law's realizations. Furthermore, the same reasoning goes to show that the left-hand and right-hand products in \eqref{eq: semigroup_law} actually must contain exactly the same factors with the same multiplicity (albeit in possibly different order), i.e., the ordered tuple $(s_{i_1},\dots,s_{i_{m_p}})$ of real numbers is some permutation of the ordered tuple $(s_{j_1},\dots,s_{j_{m_q}})$. Hence, it must be the case that
	\begin{equation*}
		s_{i_1} + s_{i_2} + \cdots + s_{i_{m_p}} = s_{j_1} + s_{j_2} + \cdots + s_{j_{m_q}},
	\end{equation*}
	implying that
	\begin{equation*}
		p^{\M}(s_1,\dots,s_n) = p^{\M}(s_1,\dots,s_n).
	\end{equation*}
	That is, the law is satisfied by $\M$ as well. Since $R$ was arbitrarily chosen, we conclude that indeed $\M$ satisfies every law $R$ that $\S$ does.


	Now, let $\psi \colon S \to M$ be a homomorphism from $\S$ to $\M$. Then, it holds that
	\begin{equation*}
		\psi(s_1\thickcdot s_2) = \psi(s_1) + \psi(s_2)
	\end{equation*}
	for all $s_1,s_2 \in S = \R$. Therefore, for $s_1=0$ and $s_2 = s$ with $s\in S$ arbitrary, we conclude that
	\begin{equation*}
		\psi(0) = \psi(|0|s) = \psi(0 \thickcdot s) = \psi(0) + \psi(s),
	\end{equation*}
	and hence
	\begin{equation*}
		\psi(0) + \psi(s) = \psi(0) + \psi(t)
	\end{equation*}
	for all $s,t\in S$. However, since $+$ is the standard addition operation on $\R$, this is only possible if
	\begin{equation*}
		\psi(s) = \psi(t)
	\end{equation*}
	for all $s,t\in S$, meaning the homomorphism $\psi$ must be the trivial mapping $\psi \colon s\mapsto C$ with $C = \psi(0)$.

	Now, let $\iso \colon L\to M$ be an arbitrary bijection. Equip $L$ with the algebra $\L$ induced by $\M$, as defined by \eqref{eq: induced_operation}. Let $\chi \colon S \to L$ be a homomorphism from $\S$ to $\L$. Then, since $\iso$ is an isomorphism from $\L$ to $\M$ (per \cref{prop: isomorphism}), the composition $\iso\circ \chi$ is a homomorphism from $\S$ to $\M$. Therefore, by our analysis above, $\iso\circ \chi$ must be a trivial homomorphism given by $\iso\circ \chi \colon s \mapsto C$ with $C = \iso\circ \chi (0)$. Hence, for all $s\in S$, we conclude that
	\begin{equation*}
		\chi(s) = \iso^{-1}(C),
	\end{equation*}
	implying that $\chi$ must be a trivial homomorphism from $\S$ to $\L$. This concludes the proof.
\end{proofE}

On the other hand, under strong assumptions on the encoder and the expressibility of the source data within Euclidean space, we can guarantee the existence of a bijection $\iso$ that recovers an isomorphism $\S\cong \M \cong \L$, despite the fact that the encoder $E$ is fixed.

\begin{propositionE}[][end,restate,text link=]
	Consider a source algebra $\S = (S,\F^{\S})$ of type $\F$, the latent space $L=\Rl$, and an arbitrary encoder $E\colon S \to L$. If $E$ is bijective and there exists a mirrored algebra $\M = (M,\F^{\M})$ with $M = \Rl$ and an isomorphism $\psi \colon S \to M$, then there exists a bijection $\iso \colon L\to M$ such that $\iso \circ E$ equals the isomorphism $\psi$.
\end{propositionE}

\begin{proofE}
	Suppose that $E$ is bijective and that there exists a mirrored algebra $\M = (M,\F^{\M})$ with $M=\Rl$ and an isomorphism $\psi \colon S\to M$. Define $\iso \colon L\to M$ by $\iso(z) = \psi \circ E^{-1}(z)$, which is well-defined since $E$ is bijective. Then, it holds that
	\begin{equation*}
		\iso\circ E(s) = \psi(E^{-1}(E(s))) = \psi(s)
	\end{equation*}
	for all $s\in S$, which proves the result.
\end{proofE}

\paragraph{Limitations.}
There is a major challenge in transporting structure from $\S$ to $L$: the mirrored space structure may not be amenable to the structure that we want. We will demonstrate this in \cref{sec: set_case_study}, providing a general impossibility result as well as a specific corollary for the Boolean lattice setting. \cref{sec: experiments} experimentally explores this challenge and shows that even satisfying a subset of source algebra laws can still yield substantial benefits. At this point, it is also worth mentioning that our method requires the mirrored space to have the same dimension as the latent space, since our transport of structure depends on the invertibility of $\iso$. Generalizing past this restriction poses an interesting direction for future work.

\section{Case study: transporting algebras of sets}
\label{sec: set_case_study}

We apply our framework to learning the algebra of subsets of Euclidean space. This would empower neural networks to operate directly on \emph{subsets} of $\Rd$ \cite{de2023deep}. Conventional networks generally only operate \emph{pointwise}, producing a single output for a single input in $\Rd$. Allowing for sets to be tractably encoded and operated on unlocks new approaches for a variety of downstream tasks, such as prediction with set-valued uncertainties \cite{mahjourian2022occupancy}, reachable set computation \cite{meng2022learning}, safety-constrained trajectory optimization \cite{michaux2023reachability}, bin packing \cite{pan2023sdf}, object pile manipulation \cite{wang2023dynamic}, and swept volume approximation in robotics \cite{chiang2021fast}.

The purpose of our work is to illustrate the general principles behind structural transport nets and to experimentally test \cref{hyp: lawsat}. We thus do not specialize to any particular downstream application. Instead, this section explores the procedure for constructing a mirrored algebra via a concrete example, and \cref{sec: experiments} provides controlled synthetic experiments which support \cref{hyp: lawsat}.

\subsection{Lattices of sets}
We introduce here the algebraic structures that are considered in this section. A \emph{Boolean lattice} is an algebra $(A,\land,\lor,\lnot,0,1)$ such that the operations $\land$, $\lor$, and $\lnot$ satisfy the laws listed in \cref{tab: lattice_laws}. In a Boolean lattice, the binary operations $\land$ and $\lor$ are read ``meet'' and ``join,'' respectively, and the unary operation $\lnot$ is read ``not'' or ``complement.'' Since $0$ and $1$ are nullary operations, the ``$0$'' and ``$1$'' in the listed laws are to be interpreted as these operations' images $0(\emptyset)$ and $1(\emptyset)$ as elements in $A$. If $S$ is a set and $\P(S)$ is the power set of $S$, then $(\P(S),\cap,\cup,{}^c,\emptyset,S)$ is a Boolean lattice with ${}^c$ denoting set complementation. Dropping complementation and nullary operations yields a \emph{distributive lattice}, which is depicted in the upper section of \cref{tab: lattice_laws}.

We denote the Boolean lattice type as $\FB$, and the distributive lattice type as $\FD$.

\begin{table}
\caption{\label{tab: lattice_laws} Distributive and Boolean lattice laws.}
\medskip
\centering
\begin{tabular}{ll}
\toprule
\textbf{Commutativity} & $x \land y = y \land x$ \\
\textbf{Commutativity$^*$} & $x \lor y = y \lor x$ \\[\tablelinestretch] 
\textbf{Associativity} & $x \land (y \land z) = (x \land y) \land z$ \\
\textbf{Associativity$^*$} & $x \lor (y \lor z) = (x \lor y) \lor z$ \\[\tablelinestretch] 
\textbf{Absorption} & $x \lor (x \land y) = x$ \\
\textbf{Absorption$^*$} & $x \land (x \lor y) = x$ \\[\tablelinestretch] 
\textbf{Distributivity} & $x \lor (y \land z) = (x \lor y) \land (x \lor z)$ \\
\textbf{Distributivity$^*$} & $x \land (y \lor z) = (x \land y) \lor (x \land z)$ \\[\tablelinestretch] 
\midrule
\multicolumn{2}{c}{$\uparrow$ \quad Distributive lattice (without $0$, $1$, $\lnot$) \quad $\uparrow$} \\
\midrule
\textbf{Identity} & $x \land 1 = x$ \\
\textbf{Identity$^*$} & $x \lor 0 = x$ \\[\tablelinestretch] 
\textbf{Complementation} & $x \land (\lnot x) = 0$ \\
\textbf{Complementation$^*$} & $x \lor (\lnot x) = 1$ \\ 
\midrule
\multicolumn{2}{c}{$\uparrow$ \quad Boolean lattice (with $0$, $1$, $\lnot$) \quad $\uparrow$} \\
\bottomrule
\end{tabular}

\vspace*{0.0cm}
\end{table}

\subsection{Boolean lattice infeasibility} \label{sec: impossibility}
This section shows that it is impossible to define continuous operations on a Euclidean mirrored space $M = \Rl$ with the type $\FB$ such that the laws in \cref{tab: lattice_laws} are satisfied. Specifically, it is impossible to define a continuous involution with no fixed point, conflicting with complementation laws. We prove this using results from homology and provide both a general statement of the result and its specific implementations for Boolean lattices.

Restricting ourselves to continuous operations is important, as the complementation operation itself is continuous with respect to a natural topology on the space of sets; we explore this more thoroughly in \Cref{app: impossibility}. A more intuitive justification arises by noting that small perturbations to a set $A$ will yield commensurate perturbations to $A^c$.

Our first result shows, informally, that it is impossible to realize an algebra with a fixed point-free involution on the mirrored space using continuous operations. We refer the reader to \cref{app: impossibility} for more details and proofs.

\textEnd{
    \clearpage
\subsection{Proofs for \cref{sec: set_case_study}} \label{app: impossibility}
\paragraph{Continuity of complementation.}
We briefly explain why we only consider continuous operations on the mirrored space to represent the complementation operation $\lnot$ in the Boolean lattice type $\FB$. Consider the complementation operation ${}^c: \pow(\Rd) \to \pow(\Rd)$. As there is no ambient topology on $\pow(\Rd)$, the continuity of ${}^c$ is not well-defined in this context. However, a natural topology arises by passing first to the Borel sets $\Sigma$ on $\Rd$ and then quotienting by the null ideal $\Sigma \cap \mathcal{N}$, with set intersection, union, and complementation inherited in the natural way. Following \citet[323A.iii.e]{fremlin2000measure}, this allows us to define a topology via the \emph{symmetric difference metric}:
\begin{equation*}
    d(A, B) \coloneqq \mu(A \triangle B) ~ \text{for all $A, B \in \Sigma / \Sigma \cap \mathcal{N}$},
\end{equation*}
where $\mu$ is inherited naturally on the quotiented space from the Borel measure on $\Rd$. Since the symmetric difference between $A$ and $B$ satisfies $A \triangle B = (A^c) \triangle (B^c)$, it is immediate that the inherited complementation operation is continuous with respect to this topology.

\paragraph{Negative result.}
Before proving our main result, we first present a key result from the algebraic topology literature.
\begin{proposition} \label{prop: fixedpoint}
    Any continuous involution on $\R^n$ has a fixed point.
\end{proposition}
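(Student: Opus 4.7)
The plan is to argue by contradiction. Suppose $f \colon \R^n \to \R^n$ is a continuous involution with no fixed point. Because $f$ is a self-inverse continuous bijection, it is a homeomorphism, and the group $G = \langle f \rangle \cong \mathbb{Z}/2$ acts freely on $\R^n$. A free action by a finite group on a Hausdorff space is automatically properly discontinuous, so the orbit projection $\pi \colon \R^n \to X$ onto $X \coloneqq \R^n / G$ (with the quotient topology) is a $2$-fold covering map with Hausdorff base.

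Next, I would extract homotopy-theoretic information from the contractibility of $\R^n$. Since $\R^n$ is simply connected, $\pi$ is the universal cover of $X$, so the covering-space long exact sequence yields $\pi_1(X) \cong \mathbb{Z}/2$ and $\pi_k(X) \cong \pi_k(\R^n) = 0$ for all $k \ge 2$. Thus $X$ is an Eilenberg--MacLane space of type $K(\mathbb{Z}/2, 1)$, which is unique up to homotopy equivalence and classically realized by $\R P^\infty$. In particular, $H^k(X; \mathbb{Z}/2) \cong H^k(\R P^\infty; \mathbb{Z}/2) \cong \mathbb{Z}/2$ for every $k \ge 0$.

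On the other hand, because $\pi$ is a local homeomorphism, $X$ inherits the structure of a paracompact topological $n$-manifold from $\R^n$. Any such space has cohomological dimension at most $n$, giving $H^k(X; \mathbb{Z}/2) = 0$ for all $k > n$. Taking any $k > n$ contradicts the previous step, completing the proof.

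The step I expect to be the main obstacle is justifying the cohomological dimension bound for paracompact $n$-manifolds; I would either cite it as a standard fact from sheaf cohomology or switch to a self-contained alternative via Smith theory. That alternative proceeds by one-point-compactifying $\R^n$ to $S^n$ and extending $f$ by $\tilde f(\infty) = \infty$---continuity at $\infty$ follows because $f$ is a homeomorphism, so $f$-preimages of compact sets remain compact---and then invoking Smith's theorem to conclude that the $\mathbb{Z}/2$-fixed set on the mod-$2$ cohomology sphere $S^n$ must itself be a mod-$2$ cohomology sphere. Since a single point fails to be a mod-$2$ cohomology sphere of any dimension, $\tilde f$ must possess a fixed point distinct from $\infty$, which necessarily lies in $\R^n$.
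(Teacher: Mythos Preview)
Your proposal is correct. The paper's own proof is a one-line citation of Theorem~9 in Jaworowski (1956) on involutions of acyclic spaces, noting that $\R^n$ is separable metric and acyclic. That result is itself established via Smith-theoretic methods, so your compactify-to-$S^n$ alternative is close in spirit to what underlies the citation, whereas your primary covering-space argument is a genuinely different route. Two minor points worth tightening in the main argument: the passage from ``$X$ is a $K(\mathbb{Z}/2,1)$'' to $H^k(X;\mathbb{Z}/2)\cong\mathbb{Z}/2$ for all $k$ tacitly uses that $X$ has CW homotopy type, which holds because topological manifolds are ANRs; alternatively, the Cartan--Leray spectral sequence for the cover $\R^n\to X$ collapses and identifies $H^*(X;\mathbb{Z}/2)$ directly with the group cohomology $H^*(\mathbb{Z}/2;\mathbb{Z}/2)$, bypassing any CW hypothesis. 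The cohomological-dimension bound you flag is handled similarly---$X$ is an ANR, so singular and \v{C}ech cohomology coincide, and the latter vanishes above the covering dimension for paracompact spaces. In the Smith-theory branch, the key assertion that a single point is not a mod-$2$ cohomology $r$-sphere for any $r\ge -1$ is correct under the standard convention (equivalently: $\chi(S^n)=1+(-1)^n$ is always even while $\chi(\{\infty\})=1$). Your approaches buy self-containment; the paper's citation buys brevity.
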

\begin{proof}[Proof of \cref{prop: fixedpoint}]
    This is an easy application of Theorem 9 in \citet{jaworowski1956antipodal}. Namely, $\R^n$ is a separable metric space, and it is acyclic because it is contractible.
\end{proof}

\begin{lemma} \label{lem: lnot_involution}
    Consider a Boolean lattice $\A = (A,\land,\lor,\lnot,0,1)$ of type $\FB$ and its associated laws in \cref{tab: lattice_laws}. Then, it holds that $\lnot$ is an involution with no fixed points.
\end{lemma}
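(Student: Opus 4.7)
The plan is to split the lemma into its two components and handle each with standard distributive-lattice reasoning. To show $(i)$ that $\lnot$ is an involution, I would first establish uniqueness of complements in any distributive lattice: if $y$ and $z$ both satisfy the complementation laws with respect to a common $x$---meaning $x \land y = x \land z = 0$ and $x \lor y = x \lor z = 1$---then combining identity, distributivity, and the complementation equations yields
\[
y = y \land 1 = y \land (x \lor z) = (y \land x) \lor (y \land z) = 0 \lor (y \land z) = y \land z,
\]
and symmetrically $z = y \land z$, so $y = z$. Applying this to $\lnot x$---which has both $x$ and $\lnot(\lnot x)$ as complements by the complementation laws---then gives $\lnot(\lnot x) = x$.

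For $(ii)$, the key intermediate step is idempotency of $\land$ and $\lor$, which I would derive from distributivity together with the identity and complementation laws via
\[
x = x \land 1 = x \land (x \lor \lnot x) = (x \land x) \lor (x \land \lnot x) = (x \land x) \lor 0 = x \land x,
\]
and an analogous dual computation giving $x \lor x = x$. Once this is in hand, if some $x$ satisfied $\lnot x = x$, the complementation laws would collapse to $0 = x \land \lnot x = x \land x = x$ and $1 = x \lor \lnot x = x \lor x = x$, forcing $0 = 1$.

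The main obstacle is the degenerate case $0 = 1$, which is not excluded by the laws in \cref{tab: lattice_laws} on their own: in the one-element Boolean lattice, $\lnot$ is vacuously the identity and does have a fixed point. I would therefore either invoke an implicit non-degeneracy assumption---consistent with the paper's motivating instance $\P(\Rd)$, where $\emptyset \neq \Rd$---or explicitly add $0 \neq 1$ to the hypotheses so that the $0 = 1$ derivation in $(ii)$ yields a genuine contradiction.
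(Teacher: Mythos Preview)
Your proposal is correct and follows the same high-level strategy as the paper: establish that $\lnot$ is an involution, then show that any fixed point forces $0 = 1$. The paper's proof is considerably terser. For part~(i) it simply asserts that ``clearly, the Boolean lattice laws imply $\lnot(\lnot a) = a$'' without justification, whereas you supply the standard uniqueness-of-complements argument that actually derives this from the listed axioms---a genuine improvement in rigor, since double negation is not among the laws in \cref{tab: lattice_laws}. For part~(ii) the paper writes $b \land b = 0 \Rightarrow b = 0$ and $b \lor b = 1 \Rightarrow b = 1$ without isolating idempotency, while you derive it explicitly. (As a minor simplification, idempotency already follows from the two absorption laws alone: $x \land (x \lor x) = x$ gives $x \lor x = x \lor (x \land (x \lor x)) = x$, and dually.)

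Your observation about the degenerate case $0 = 1$ is well taken and applies equally to the paper's own argument: the paper concludes ``this is a contradiction'' after deriving $b = 0$ and $b = 1$, tacitly assuming $0 \ne 1$, which is not among the stated laws. In the paper's intended application---ruling out a continuous Boolean-lattice structure on $M = \Rl$---this is harmless, since the one-element lattice cannot be realized on $\Rl$ anyway; but you are right that, strictly speaking, non-degeneracy should be added as a hypothesis.
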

\begin{proof}[Proof of \cref{lem: lnot_involution}]
    Clearly, the Boolean lattice laws in \cref{tab: lattice_laws} imply that $\lnot(\lnot a) = a$ for all $a\in A$, and thus $\lnot$ is an involution. Now assume for the sake of contradiction that $\lnot$ has a fixed point $b\in A$, so that $\lnot b = b$. Then, by the Boolean lattice laws,
    \begin{equation*}
        (\lnot b) \land b = 0 \implies b \land b = 0 \implies b = 0,
    \end{equation*}
    and, similarly,
    \begin{equation*}
        (\lnot b) \lor b = 1 \implies b \lor b = 1 \implies b = 1.
    \end{equation*}
    This is a contradiction.
\end{proof}

We are now ready to prove a general negative result.
}

\newcommand{\smallsquare}{{\scriptstyle \square}}

\begin{theoremE}[][end,restate,text link=] \label{thm: no_fixedpoint_involution}
    Consider an algebra $\A = (A, \F^{\A})$ with a unary operation $\smallsquare^{\A}$. Assume $\A$ satisfies laws $R_1, \dots, R_n$ which imply that $\smallsquare$ has no fixed point: $\smallsquare (x) \neq x$ for all $x \in A$. Furthermore, assume that one of the laws $R_i$ is the involution law given by
    \begin{equation*}
        \smallsquare(\smallsquare(x)) = x.
    \end{equation*}
    Then, there exists no algebra $\B = (B, \F^{\B})$ on the Euclidean space $B = \Rl$ such that $\smallsquare^{\B}$ is continuous and $R_1, \dots, R_n$ are all satisfied by $\B$.
\end{theoremE}
\begin{proofE}
	Suppose for the sake of contradiction that there exists an algebra $\B=(B,\F^{\B})$ on $B = \Rl$ such that $\smallsquare^{\B}$ is continuous and the laws $R_1,\dots,R_n$ are all satisfied by $\B$. Then, by assumption it must be the case that $\smallsquare^{\B}(b) \ne b$ for all $b\in B$. However, since $\smallsquare^{\B}$ is a continuous involution on $B = \Rl$, by \cref{prop: fixedpoint}, $\smallsquare^{\B}$ has a fixed point, i.e., $\smallsquare^{\B}(b) = b$ for some $b\in B$. This is a contradiction, and hence the result is proven. 
\end{proofE}

We provide a specific instantiation of the above theorem for our considered case of Boolean lattices, leveraging the fact that the complementation operation is unrealizable.

\begin{corollaryE}[][end,restate,text link=] \label{cor: no_euclidean_boollattice}
    The Boolean lattice type $\FB$ cannot be realized on $M = \Rl$ with continuous operations such that the Boolean lattice laws in \cref{tab: lattice_laws} are satisfied.
\end{corollaryE}
\begin{proofE}
    This follows directly from \cref{lem: lnot_involution} and \cref{thm: no_fixedpoint_involution}.
\end{proofE}
\begin{proofsketch}
    The Boolean not operator $\lnot$ satisfies $\lnot(\lnot a) = a$ for any $a$ in the domain. Furthermore, Boolean lattice laws show that if there were a fixed point $b = \lnot b$, we must have both $b=0$ and $b=1$; this is a contradiction and thus $\lnot$ is an involution with no fixed points. Applying \cref{thm: no_fixedpoint_involution} concludes the proof.
\end{proofsketch}

\subsection{Relaxing to a distributive lattice} \label{sec: relaxing}
\cref{sec: impossibility} shows that a Boolean lattice structure cannot be realized on $M = \Rl$. We relax our requirements to that of a distributive lattice, and present a structure known as a Riesz algebra that realizes $\FD$ and satisfies all associated laws.

\begin{definition} \label{def: repspace}
    The \emph{Riesz mirrored algebra} is the distributive lattice $\M = (M,\FD^\M)$ with operations given by
    \begin{align*}
        a \land^{\M} b = \min(a, b) \quad \text{and} \quad
        a \lor^{\M} b = \max(a, b)
    \end{align*}
    on $M = \Rl$, where $\min$ and $\max$ are defined elementwise. This algebra satisfies the distributive lattice laws in \cref{tab: lattice_laws}.
\end{definition}

Since our specific application concerns the distributed lattice of sets, we can equivalently take our operation symbols to be $\cap$ and $\cup$ in place of $\land$ and $\lor$, respectively. With this notation, the realization $\cap^{\S}: S \times S \to S$ is standard set intersection on $S = \pow(\Rd)$, the realization $\cap^{\M}: M \times M \to M$ is elementwise maximum on the mirrored space $M=\R^{l}$, and $\cap^{\L}: L \times L \to L$ is the operation on $L=\R^{l}$ induced via \eqref{eq: induced_operation}. Analogous notational identifications also hold for $\cup$.

\section{Experiments} \label{sec: experiments}

This section details our experimental results on transporting structure from algebras of sets to latent embeddings. Following the infeasibility result and subsequent structural relaxation in \cref{sec: set_case_study}, we seek to transport the distributive lattice defined by set intersection and set union, disregarding complementation. Our desired laws are listed in the upper section of \cref{tab: lattice_laws}, identifying $\land$ with $\cap$, and $\lor$ with $\cup$.

Our experiments explore the impact of different choices for mirrored algebra operations $\cap^{\M}$ and $\cup^{\M}$. \cref{sec: sat_laws} shows that operations that are well-aligned with source algebra laws outperform those that satisfy few laws, affirming \cref{hyp: lawsat}. \cref{sec: equivalent_expr} shows that well-designed mirrored algebras are crucial for ensuring \emph{self-consistency}: the property that equivalent terms produce the same prediction.

We now introduce the shared portions of the experimental setup, with further details deferred to \cref{app: experiments}. 

\paragraph{Candidate operations.} Our distributive lattice of sets contains two binary operations: meet ($\cap$) and join ($\cup$). We must realize these on the mirrored space as binary vector operations $\cap^{\M}$ and $\cup^{\M}$. We restrict ourselves to closed-form operations that are well-conditioned (as opposed to elementwise division or exponentiation, for example). The list of candidate operations in \cref{tab: candidates} includes the Riesz algebra $\min$ and $\max$ operations, as well as the standard vector operations of addition, subtraction, and Hadamard product. For diversity, we include an operation that is commutative but not associative (scaled addition), associative but not commutative (matrix product), and neither (cyclic addition).

\begin{table}
\caption{\label{tab: candidates} List of candidate operations on $M$.}
\medskip
\centering
\begin{tabular}{ll}
\toprule
\textbf{Element min ($\min$)} & $(a, b) \mapsto \min(a, b)$ \\
\textbf{Element max ($\max$)} & $(a, b) \mapsto \max(a, b)$ \\
\textbf{Addition ($+$)} & $(a, b) \mapsto a + b$ \\
\textbf{Subtraction ($-$)} & $(a, b) \mapsto a - b$ \\
\textbf{Hadamard prod. ($\odot$)} & $(a, b) \mapsto a \odot b$ \\
\textbf{Scaled addition ($+_s$)} & $(a, b) \mapsto 2 a + 2 b$ \\
\textbf{Matrix prod. ($\times_{\text{mat}}$)} & $(a, b) \mapsto \mathtt{sq}^{-1} (\mathtt{sq}(a) \cdot \mathtt{sq}(b))$ \\
\textbf{Cyclic addition ($+_c$)} & $(a, b) \mapsto \mathtt{roll}(a) + b$ \\
\bottomrule
\end{tabular}
\vspace*{0.3cm}
\end{table}

We define the function $\mathtt{sq} \colon \R^{l} \to \R^{\sqrt{l} \times \sqrt{l}}$ to reshape a vector into a square matrix (assuming $l$ is a square number), and $\mathtt{roll} \colon \Rl \to \Rl$ to cycle vector elements by one index. We denote the set of all candidate operations by
\begin{equation*}
    \C = \{ \min, \max, +, -, \odot, +_s, \times_{\text{mat}}, +_c \}.
\end{equation*}

\begin{figure*}
\centering
\begin{subfigure}[b]{0.44\linewidth}
    \centering
    \includegraphics[width=\linewidth]{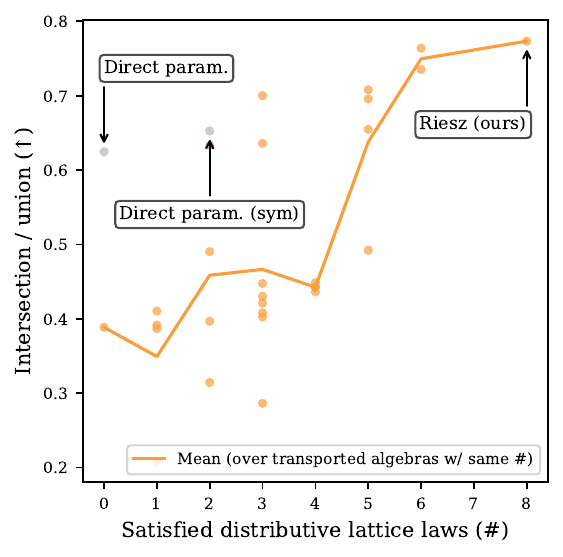}
    \caption{\label{fig: line_a}}
\end{subfigure}
\quad
\begin{subfigure}[b]{0.44\linewidth}
    \centering
    \includegraphics[width=\linewidth]{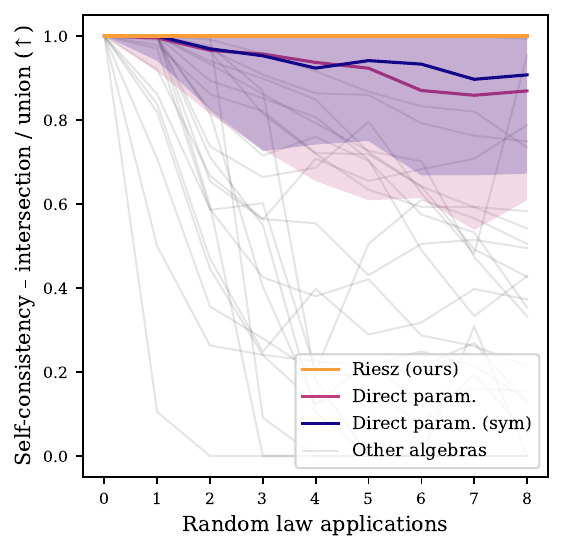}
    \caption{\label{fig: line_b}}
\end{subfigure}

\caption{\label{fig: line} \subref{fig: line_a}) Learned operation performance vs. satisfaction of distributive lattice laws (solid line is mean). \subref{fig: line_b}) Self-consistency vs. number of random symbolic manipulations (i.e., law applications). Solid lines are medians, shaded areas capture $20$th to $80$th percentile ranges.}
\vspace*{0.2cm}
\end{figure*}

\paragraph{Dataset.} To generate a synthetic random subset of $\Rd$ for $d=2$, we first uniformly sample two random integers $n_i, n_o$ from $\{1, 2, \dots, 10\}$. We then restrict ourselves to the zero-centered square and sample $n_i$ and $n_o$ points from $[-1, 1]^2$ to yield $I = \{v^i_1, \dots, v^i_{n_i}\}$ and $O = \{v^o_1, \dots, v^o_{n_o}\}$.
We then generate a set $U$ from these points as follows:
\begin{equation*}
    U = \Big\{ u \in [-1, 1]^2 : \min_{v \in I} \|v - u\|_2 \leq \min_{v \in O} \|v - u\|_2 \Big\}.
    \label{eq: randset}
\end{equation*}
We generate $10^4$ such random sets with an $80\%$ training, $10\%$ validation, and $10\%$ testing split. For each set, an INR is trained on evaluations of the set indicator function using a SIREN architecture \cite{sitzmann2020implicit}. An \texttt{inr2vec} architecture \citep{de2023deep} is then trained over this dataset, resulting in: 1) an encoder $E \colon S \to L$ mapping a set (as represented by the raw weight matrices of an INR) to a latent embedding space $L = \Rl$ with $l=1024$, and 2) a decoder $D \colon [-1,1]^2 \times L \to \R$ that predicts whether a particular point is in the set associated with a latent.

With some abuse of notation, we let $E(U) \in L$ denote the embedding of the INR trained on a set $U \subseteq [-1,1]^2$ as described above. We precompute and store latent embeddings for all INRs, after which the encoder is no longer required. Decoder weights are also fixed for our later experiments.

\paragraph{Parameterizations.} A particular training run starts with a fixed choice of operations $\cap^{\M}, \cup^{\M} \colon M \times M \to M$ on the mirrored space (e.g., $\cap^{\M} = \min$ and $\cup^{\M} = \max$ for the Riesz mirrored algebra). The learned bijection $\iso \colon L \to M$ is constructed as a modified NICE architecture \cite{dinh2014nice}. At training time, $\iso$ is the only learned component. Importantly, $\iso$ induces latent space operations $\cap^{\L}, \cup^{\L} \colon L \times L \to L$ from $\cap^{\M}, \cup^{\M}$ via \eqref{eq: induced_operation}.

For reference, we also try to \emph{directly parameterize} operations on the latent space as $\cap^{\L} = f_{\cap}$ and $\cup^{\L} = f_{\cup}$, with learned functions $f_{\cap}, f_{\cup}\colon L \times L \to L$. We compare two options for this parameterization. The first is simply constructing $f_{\cap}$ and $f_{\cup}$ as multilayer perceptrons on the vector concatenation of inputs (no law guarantees). The second involves parameterizing in a symmetric, commutativity-preserving manner via the form 
\[
    f(z_1,z_2) = h\big(g(z_1) + g(z_2)\big),
\]
where $h$ and $g$ are separate MLPs with compatible domains and codomains. We annotate this second parameterization using ``sym'' in our plots (see \citet{zaheer2017deep}).

\paragraph{Loss and metrics.} The training loss and evaluation metrics are computed over randomly constructed terms with a random number of starting symbols $\ell \in \{1, 2, \dots, {\ell}_{\max}\}$ (in our experiments, ${\ell}_{\max}=10$). We generate these by recursively combining random pairs of terms with either $\cap$ or $\cup$, starting with $\ell$ singleton terms (i.e., variables) and ending after $\ell-1$ operations when only the final combined term remains.

For a particular such term $p(x_1, \dots, x_{\ell})$, we fetch $\ell$ sets $U_1, \dots, U_{\ell}$ from data with corresponding precomputed \texttt{inr2vec} latent embeddings $z_1, \dots, z_{\ell}$, recalling that $z_i = E(U_i) \in L$. We evaluate the ground truth set $\Utrue \subseteq [-1, 1]^2$ via the realized term value
\[
    \Utrue = p^{\S}(U_1, \dots, U_{\ell}),
\]
taking $\cap^{\S}$ and $\cup^{\S}$ to be standard set-theoretic intersection and union. We similarly evaluate the predicted latent 
\begin{equation}
    \zpred=p^{\L}(z_1, \dots, z_{\ell}),
    \label{eq: z_pred}
\end{equation}
using $\cap^{\L}$ and $\cup^{\L}$, that are induced from $\cap^{\M}$ and $\cup^{\M}$ via \eqref{eq: induced_operation}. The predicted set is then given by
\begin{equation}
    \Upred = \{ u \in [-1, 1]^2 : D(u, \zpred) \geq 0 \}.
    \label{eq: U_pred}
\end{equation}
All metrics are then approximated using uniformly sampled $u \in [-1, 1]^2$. Our loss is the expectation of the binary cross-entropy loss against the ground truth set indicator function
\begin{equation*}
    \loss(\Upred, \Utrue) = \E \nolimits_{u} \left[ \text{BCE} \big( D(u, \zpred), \I_{\Utrue}(u) \big) \right],
\end{equation*}
and our intersection over union (IoU) metric is written as 
\begin{equation*}
    \iou(\Upred, \Utrue) = 
    \frac{
        \E \nolimits_{u}  [\I_{\Utrue \cap \Upred}(u)]
    }{
        \E \nolimits_{u}  [\I_{\Utrue \cup \Upred}(u)]
    }.
\end{equation*}
The IoU score ranges from zero to one (perfect prediction).

\subsection{Operation performance vs.\ structure choice} \label{sec: sat_laws}

\begin{table*}[ht]
    \centering
    \caption{\label{tab: algebra_short} Selection of candidate operations on the mirrored space with the satisfied distributive lattice laws (fully reproduced in \cref{tab: algebra_long}). Due to distributive lattice symmetries, we have two laws for each column (e.g., $a \cap^{\M} b = b \cap^{\M} a$ and $a \cup^{\M} b = b \cup^{\M} a$). The first row imposes a Riesz algebra structure. The second column counts how many laws are satisfied by a particular pair of operations.
    }
    \medskip
    \begin{tabular}{llrcccc}
\toprule
 Operations                        &                                   &   \# & Commutativity ($^*$)   & Associativity ($^*$)   & Absorption ($^*$)   & Distributivity ($^*$)   \\
\midrule
 $\cap^{\M} = \max$                & $\cup^{\M} = \min$                &    8 & \cmark \cmark          & \cmark \cmark          & \cmark \cmark       & \cmark \cmark           \\
 $\cap^{\M} = \max$                & $\cup^{\M} = \odot$               &    6 & \cmark \cmark          & \cmark \cmark          & \xmark \cmark       & \cmark \xmark           \\
 $\cap^{\M} = \min$                & $\cup^{\M} = +$                   &    6 & \cmark \cmark          & \cmark \cmark          & \xmark \cmark       & \cmark \xmark           \\
 $\cap^{\M} = \max$                & $\cup^{\M} = +$                   &    5 & \cmark \cmark          & \cmark \cmark          & \xmark \xmark       & \cmark \xmark           \\
 $\cap^{\M} = \min$                & $\cup^{\M} = \odot$               &    5 & \cmark \cmark          & \cmark \cmark          & \xmark \xmark       & \cmark \xmark           \\
 $\cap^{\M} = \min$                & $\cup^{\M} = +_s$                 &    5 & \cmark \cmark          & \cmark \xmark          & \xmark \cmark       & \cmark \xmark           \\
 $\cap^{\M} = +$                   & $\cup^{\M} = \odot$               &    5 & \cmark \cmark          & \cmark \cmark          & \xmark \xmark       & \cmark \xmark           \\
\multicolumn{7}{c}{\makebox[1em][c]{\vdots}} \\
 $\cap^{\M} = \times_{\text{mat}}$ & $\cup^{\M} = +_c$                 &    1 & \xmark \xmark          & \cmark \xmark          & \xmark \xmark       & \xmark \xmark           \\
 $\cap^{\M} = -$                   & $\cup^{\M} = +_c$                 &    0 & \xmark \xmark          & \xmark \xmark          & \xmark \xmark       & \xmark \xmark           \\
\bottomrule
\end{tabular}

    \vspace*{0.5cm}
\end{table*}

This experiment tests various candidate realizations of $\cap$ and $\cup$ on $M$, with the aim of evaluating whether satisfying distributed lattice laws induces superior performance. We consider all possible assignments $(\cap^{\M}, \cup^{\M}) \in \C\times \C$ with $\cap^{\M} \neq \cup^{\M}$, excluding flipped assignments (e.g., $(\max, \min)$ versus $(\min, \max)$) due to the exact symmetry of distributive lattice laws and our data generating process. This results in $\binom{|\C|}{2} = \binom{8}{2} = 28$ possible combinations. For each assignment $(\cap^{\M}, \cup^{\M})$, we determine which distributive lattice laws from \cref{tab: lattice_laws} are satisfied using numerical testing. We provide some illustrative examples in \cref{tab: algebra_short}, with the full list provided by \cref{tab: algebra_long} in the appendix.

Our results are depicted in \cref{fig: line_a}. Each dot represents a particular choice of operations (i.e., a particular mirrored algebra). The $x$-axis groups together algebras which satisfy the same number of distributive lattice laws (\# column in \cref{tab: algebra_long}). The $y$-axis reports the mean IoU performance of a particular algebra, averaged over random terms.

\cref{fig: line_a} provides clear experimental support for \cref{hyp: lawsat}: the accuracy of learned set operations is strongly tied to the number of satisfied source algebra laws. The Riesz algebra completely satisfies all $8$ laws and achieves the best performance, while operations with few satisfied laws struggle. Despite significantly underperforming the Riesz algebra, the direct latent parameterizations surpass transported algebras with a similar number of satisfied laws, suggesting that the flexibility of their parameterization somewhat mitigates their lack of algebraic structure. Interestingly, algebras that only violate a few laws substantially outperform algebras that violate most or all; there is a notable increasing trend in performance. Thus even when not all laws can be satisfied, a reasonably well-aligned mirrored algebra can still provide substantial benefits.

\subsection{Consistency under equivalent terms} \label{sec: equivalent_expr}
This experiment adopts the same setting as above, but considers a different question: how \emph{self-consistent} are the predictions of a model for terms that are distinct but equivalent with respect to $\FD$? Naturally, we expect a good model to provide the same predicted set for $A \cap B$ and $B \cap A$.

Consider a random term $p(x_1, \dots, x_{\ell})$, with sampled latents $z_1, \dots, z_{\ell}$ yielding a predicted set $\Upred$ via \eqref{eq: z_pred} and \eqref{eq: U_pred}. Instead of comparing $\Upred$ to $\Utrue$, we generate a family of equivalent terms $q_i(x_1, \dots, x_{\ell})$ by randomly selecting laws and substituting their expressions into $p(x_1,\dots,x_{\ell})$ if such expressions are present in $p(x_1,\dots,x_{\ell})$. For each equivalent term, we compare the new predicted set $V_{\textup{pred}}$ (computed via \eqref{eq: z_pred} and \eqref{eq: U_pred} as before) with the original prediction $\Upred$ and compute the corresponding IoU metric.

\cref{fig: line_b} summarizes our results. The $x$-axis represents the number of law applications, and the $y$-axis represents the self-consistency IoU. The solid lines represent the median performance for each choice of candidate operations, with the shaded areas representing the direct parameterizations' $20$-to-$80$th percentile ranges.

Our Riesz mirrored algebra is perfectly self-consistent, experimentally validating \cref{prop: isomorphism}. While the median performance of the learned baselines degrades moderately as the terms diverge, the bottom quartile drops sharply with even just two random symbolic manipulations. Interestingly, the direct parameterizations have a higher self-consistency than most other algebras, despite satisfying only zero or two laws. This suggests that a flexible parameterization can learn the appropriate symmetries to some degree, although we note that the Riesz algebra is decidedly superior to both across all experiments.

\section{Conclusion}
Interesting mathematical objects generally carry additional algebraic structure, such as operations and laws. Machine learning methods often encode such objects (sets, functions, etc.) into latent embeddings for downstream tasks. This paper examines the possibility of learning latent space operations that provably satisfy the same structural laws as the source algebra of input data. We provide a general procedure for constructing \emph{structural transport nets} to carry out such transport of structure, and we illustrate the method in a concrete case study of the algebra of sets. Experiment results support our key hypothesis: stronger alignment between latent space operations and source algebra laws improves the performance of learned operations.
Exciting future research involves further developing the theory of realizable latent-space operations and exploring downstream applications of structural transport nets.

\section*{Impact Statement}
This paper presents work whose goal is to advance the field of machine learning. There are many potential societal consequences of our work, none which we feel must be specifically highlighted here.

\bibliography{main}

\newcommand{\etalchar}[1]{$^{#1}$}
\begin{thebibliography}{GHK{\etalchar{+}}12}

\bibitem[GHK{\etalchar{+}}12]{gierz2012compendium}
Gerhard Gierz, Karl~Heinrich Hofmann, Klaus Keimel, Jimmie~D Lawson, Michael Mislove, and Dana~S Scott.
\newblock {\em A compendium of continuous lattices}.
\newblock Springer Science \& Business Media, 2012.

\bibitem[Jec03]{jech2003set}
Thomas Jech.
\newblock {\em Set theory: The third millennium edition, revised and expanded}.
\newblock Springer, 2003.

\bibitem[Koc59]{koch1959arcs}
RJ~Koch.
\newblock Arcs in partially ordered spaces.
\newblock 1959.

\bibitem[Via14]{viale2014notes}
M~Viale.
\newblock Notes on forcing.
\newblock 2014.

\bibitem[Vla02]{vladimirov2013boolean}
Denis~Artemʹevich Vladimirov.
\newblock {\em Boolean algebras in analysis}, volume 540.
\newblock Springer Science \& Business Media, 2002.

\end{thebibliography}


\begin{thebibliography}{37}
\providecommand{\natexlab}[1]{#1}
\providecommand{\url}[1]{\texttt{#1}}
\expandafter\ifx\csname urlstyle\endcsname\relax
  \providecommand{\doi}[1]{doi: #1}\else
  \providecommand{\doi}{doi: \begingroup \urlstyle{rm}\Url}\fi

\bibitem[Azizian \& Lelarge(2020)Azizian and Lelarge]{azizian2020expressive}
Azizian, W. and Lelarge, M.
\newblock Expressive power of invariant and equivariant graph neural networks.
\newblock \emph{arXiv preprint arXiv:2006.15646}, 2020.

\bibitem[Behrmann et~al.(2019)Behrmann, Grathwohl, Chen, Duvenaud, and Jacobsen]{behrmann2019invertible}
Behrmann, J., Grathwohl, W., Chen, R.~T., Duvenaud, D., and Jacobsen, J.-H.
\newblock Invertible residual networks.
\newblock In \emph{International Conference on Machine Learning}, 2019.

\bibitem[Bronstein et~al.(2021)Bronstein, Bruna, Cohen, and Veli{\v{c}}kovi{\'c}]{bronstein2021geometric}
Bronstein, M.~M., Bruna, J., Cohen, T., and Veli{\v{c}}kovi{\'c}, P.
\newblock Geometric deep learning: Grids, groups, graphs, geodesics, and gauges.
\newblock \emph{arXiv preprint arXiv:2104.13478}, 2021.

\bibitem[Burris \& Sankappanavar(1981)Burris and Sankappanavar]{burris1981course}
Burris, S. and Sankappanavar, H.~P.
\newblock \emph{A Course in Universal Algebra}, volume~78.
\newblock Springer, 1981.

\bibitem[Chen \& Zhang(2019)Chen and Zhang]{chen2019learning}
Chen, Z. and Zhang, H.
\newblock Learning implicit fields for generative shape modeling.
\newblock In \emph{Proceedings of the IEEE/CVF Conference on Computer Vision and Pattern Recognition}, 2019.

\bibitem[Chiang et~al.(2021)Chiang, Baxter, Sugaya, Yousefi, Faust, and Tapia]{chiang2021fast}
Chiang, H.-T.~L., Baxter, J.~E., Sugaya, S., Yousefi, M.~R., Faust, A., and Tapia, L.
\newblock Fast deep swept volume estimator.
\newblock \emph{The International Journal of Robotics Research}, 40\penalty0 (10-11):\penalty0 1068--1086, 2021.

\bibitem[Cohen \& Welling(2016)Cohen and Welling]{cohen2016group}
Cohen, T. and Welling, M.
\newblock Group equivariant convolutional networks.
\newblock In \emph{International Conference on Machine Learning}, 2016.

\bibitem[Cohen et~al.(2019)Cohen, Geiger, and Weiler]{cohen2019general}
Cohen, T.~S., Geiger, M., and Weiler, M.
\newblock A general theory of equivariant cnns on homogeneous spaces.
\newblock \emph{Advances in neural information processing systems}, 32, 2019.

\bibitem[De~Luigi et~al.(2023)De~Luigi, Cardace, Spezialetti, Ramirez, Salti, and Di~Stefano]{de2023deep}
De~Luigi, L., Cardace, A., Spezialetti, R., Ramirez, P.~Z., Salti, S., and Di~Stefano, L.
\newblock Deep learning on implicit neural representations of shapes.
\newblock In \emph{International Conference on Machine Learning}, 2023.

\bibitem[Devlin et~al.(2018)Devlin, Chang, Lee, and Toutanova]{devlin2018bert}
Devlin, J., Chang, M.-W., Lee, K., and Toutanova, K.
\newblock Bert: Pre-training of deep bidirectional transformers for language understanding.
\newblock \emph{arXiv preprint arXiv:1810.04805}, 2018.

\bibitem[Dinh et~al.(2015)Dinh, Krueger, and Bengio]{dinh2014nice}
Dinh, L., Krueger, D., and Bengio, Y.
\newblock {NICE}: Non-linear independent components estimation.
\newblock \emph{International Conference on Learning Representations (Workshop)}, 2015.

\bibitem[Fremlin(2002)]{fremlin2000measure}
Fremlin, D.~H.
\newblock \emph{Measure Theory}, volume~3.
\newblock Torres Fremlin, 2002.

\bibitem[Ha et~al.(2017)Ha, Dai, and Le]{ha2016hypernetworks}
Ha, D., Dai, A., and Le, Q.
\newblock Hyper{N}etworks.
\newblock In \emph{International Conference on Learning Representations}, 2017.

\bibitem[He \& Kim(2023)He and Kim]{he2023learning}
He, Y.-H. and Kim, M.
\newblock Learning algebraic structures: preliminary investigations.
\newblock \emph{International Journal of Data Science in the Mathematical Sciences}, 2023.

\bibitem[Jaworowski(1956)]{jaworowski1956antipodal}
Jaworowski, J.
\newblock On antipodal sets on the sphere and on continuous involutions.
\newblock \emph{Fundamenta Mathematicae}, 2\penalty0 (43):\penalty0 241--254, 1956.

\bibitem[Kobyzev et~al.(2020)Kobyzev, Prince, and Brubaker]{kobyzev2020normalizing}
Kobyzev, I., Prince, S.~J., and Brubaker, M.~A.
\newblock Normalizing flows: An introduction and review of current methods.
\newblock \emph{IEEE Transactions on Pattern Analysis and Machine Intelligence}, 2020.

\bibitem[Kondor \& Trivedi(2018)Kondor and Trivedi]{kondor2018generalization}
Kondor, R. and Trivedi, S.
\newblock On the generalization of equivariance and convolution in neural networks to the action of compact groups.
\newblock In \emph{International Conference on Machine Learning}, pp.\  2747--2755. PMLR, 2018.

\bibitem[Mahjourian et~al.(2022)Mahjourian, Kim, Chai, Tan, Sapp, and Anguelov]{mahjourian2022occupancy}
Mahjourian, R., Kim, J., Chai, Y., Tan, M., Sapp, B., and Anguelov, D.
\newblock Occupancy flow fields for motion forecasting in autonomous driving.
\newblock \emph{IEEE Robotics and Automation Letters}, 7\penalty0 (2):\penalty0 5639--5646, 2022.

\bibitem[Maron et~al.(2018)Maron, Ben-Hamu, Shamir, and Lipman]{maron2018invariant}
Maron, H., Ben-Hamu, H., Shamir, N., and Lipman, Y.
\newblock Invariant and equivariant graph networks.
\newblock \emph{arXiv preprint arXiv:1812.09902}, 2018.

\bibitem[Martin-Maroto \& de~Polavieja(2018)Martin-Maroto and de~Polavieja]{martin2018algebraic}
Martin-Maroto, F. and de~Polavieja, G.~G.
\newblock Algebraic machine learning.
\newblock \emph{arXiv preprint arXiv:1803.05252}, 2018.

\bibitem[Meng et~al.(2022)Meng, Sun, Qiu, Waez, and Fan]{meng2022learning}
Meng, Y., Sun, D., Qiu, Z., Waez, M. T.~B., and Fan, C.
\newblock Learning density distribution of reachable states for autonomous systems.
\newblock In \emph{Conference on Robot Learning}, pp.\  124--136. PMLR, 2022.

\bibitem[Mescheder et~al.(2019)Mescheder, Oechsle, Niemeyer, Nowozin, and Geiger]{mescheder2019occupancy}
Mescheder, L., Oechsle, M., Niemeyer, M., Nowozin, S., and Geiger, A.
\newblock Occupancy networks: Learning 3d reconstruction in function space.
\newblock In \emph{Proceedings of the IEEE/CVF Conference on Computer Vision and Pattern Recognition}, 2019.

\bibitem[Michaux et~al.(2023)Michaux, Chen, Kwon, and Vasudevan]{michaux2023reachability}
Michaux, J., Chen, Q., Kwon, Y., and Vasudevan, R.
\newblock Reachability-based trajectory design with neural implicit safety constraints.
\newblock \emph{arXiv preprint arXiv:2302.07352}, 2023.

\bibitem[Nichol et~al.(2021)Nichol, Dhariwal, Ramesh, Shyam, Mishkin, McGrew, Sutskever, and Chen]{nichol2021glide}
Nichol, A., Dhariwal, P., Ramesh, A., Shyam, P., Mishkin, P., McGrew, B., Sutskever, I., and Chen, M.
\newblock Glide: Towards photorealistic image generation and editing with text-guided diffusion models.
\newblock \emph{arXiv preprint arXiv:2112.10741}, 2021.

\bibitem[Pan et~al.(2023)Pan, Hui, Gao, Zhu, Liu, Heng, and Fu]{pan2023sdf}
Pan, J.-H., Hui, K.-H., Gao, X., Zhu, S., Liu, Y.-H., Heng, P.-A., and Fu, C.-W.
\newblock Sdf-pack: Towards compact bin packing with signed-distance-field minimization.
\newblock In \emph{2023 IEEE/RSJ International Conference on Intelligent Robots and Systems (IROS)}, pp.\  10612--10619. IEEE, 2023.

\bibitem[Papamakarios et~al.(2021)Papamakarios, Nalisnick, Rezende, Mohamed, and Lakshminarayanan]{papamakarios2021normalizing}
Papamakarios, G., Nalisnick, E., Rezende, D.~J., Mohamed, S., and Lakshminarayanan, B.
\newblock Normalizing flows for probabilistic modeling and inference.
\newblock \emph{Journal of Machine Learning Research}, 2021.

\bibitem[Park et~al.(2019)Park, Florence, Straub, Newcombe, and Lovegrove]{park2019deepsdf}
Park, J.~J., Florence, P., Straub, J., Newcombe, R., and Lovegrove, S.
\newblock Deep{SDF}: Learning continuous signed distance functions for shape representation.
\newblock In \emph{Proceedings of the IEEE/CVF Conference on Computer Vision and Pattern Recognition}, 2019.

\bibitem[Sennesh et~al.(2023)Sennesh, Xu, and Maruyama]{sennesh2023computing}
Sennesh, E., Xu, T., and Maruyama, Y.
\newblock Computing with categories in machine learning.
\newblock In \emph{International Conference on Artificial General Intelligence}, 2023.

\bibitem[Shiebler et~al.(2021)Shiebler, Gavranovi{\'c}, and Wilson]{shiebler2021category}
Shiebler, D., Gavranovi{\'c}, B., and Wilson, P.
\newblock Category theory in machine learning.
\newblock \emph{arXiv preprint arXiv:2106.07032}, 2021.

\bibitem[Sitzmann et~al.(2020)Sitzmann, Martel, Bergman, Lindell, and Wetzstein]{sitzmann2020implicit}
Sitzmann, V., Martel, J., Bergman, A., Lindell, D., and Wetzstein, G.
\newblock Implicit neural representations with periodic activation functions.
\newblock In \emph{Advances in Neural Information Processing Systems}, 2020.

\bibitem[Sohn et~al.(2015)Sohn, Lee, and Yan]{sohn2015learning}
Sohn, K., Lee, H., and Yan, X.
\newblock Learning structured output representation using deep conditional generative models.
\newblock \emph{Advances in neural information processing systems}, 28, 2015.

\bibitem[Teshima et~al.(2020)Teshima, Ishikawa, Tojo, Oono, Ikeda, and Sugiyama]{teshima2020coupling}
Teshima, T., Ishikawa, I., Tojo, K., Oono, K., Ikeda, M., and Sugiyama, M.
\newblock Coupling-based invertible neural networks are universal diffeomorphism approximators.
\newblock \emph{Advances in Neural Information Processing Systems}, 33:\penalty0 3362--3373, 2020.

\bibitem[Wang et~al.(2022)Wang, Yang, Huang, Jiao, Yang, Jiang, Majumder, and Wei]{wang2022text}
Wang, L., Yang, N., Huang, X., Jiao, B., Yang, L., Jiang, D., Majumder, R., and Wei, F.
\newblock Text embeddings by weakly-supervised contrastive pre-training.
\newblock \emph{arXiv preprint arXiv:2212.03533}, 2022.

\bibitem[Wang et~al.(2023)Wang, Li, Driggs-Campbell, Fei-Fei, and Wu]{wang2023dynamic}
Wang, Y., Li, Y., Driggs-Campbell, K., Fei-Fei, L., and Wu, J.
\newblock Dynamic-resolution model learning for object pile manipulation.
\newblock \emph{arXiv preprint arXiv:2306.16700}, 2023.

\bibitem[Wechler(2012)]{wechler2012universal}
Wechler, W.
\newblock \emph{Universal Algebra for Computer Scientists}, volume~25.
\newblock Springer Science \& Business Media, 2012.

\bibitem[Winkler et~al.(2019)Winkler, Worrall, Hoogeboom, and Welling]{winkler2019learning}
Winkler, C., Worrall, D., Hoogeboom, E., and Welling, M.
\newblock Learning likelihoods with conditional normalizing flows.
\newblock \emph{arXiv preprint arXiv:1912.00042}, 2019.

\bibitem[Zaheer et~al.(2017)Zaheer, Kottur, Ravanbakhsh, Poczos, Salakhutdinov, and Smola]{zaheer2017deep}
Zaheer, M., Kottur, S., Ravanbakhsh, S., Poczos, B., Salakhutdinov, R.~R., and Smola, A.~J.
\newblock Deep sets.
\newblock In \emph{Advances in Neural Information Processing Systems}, 2017.

\end{thebibliography}
\bibliographystyle{icml2024}

\newpage
\appendix
\onecolumn
\section{Proofs}
\label{app: proofs}
\printProofs

\clearpage
\section{Experiments} \label{app: experiments}
This appendix describes our experimental setup and provides additional figures conveying our results.

Reproducing these experiments takes approximately $5$ GPU-days on an RTX A6000 GPU with an i7 core CPU. Our codebase was developed against PyTorch 2.1.2.

\subsection{Hyperparameters and metrics} \label{app: experiment_hyperparameters}

This section details the training hyperparameters and model architectures for reproducing our results. When training any model, we later reload model weights from the checkpoint with the best validation loss. All of our reported metrics are evaluated against a separate testing set.

\paragraph{Implicit neural representations.} We train each INR using binary cross-entropy loss over the provided shape indicator function (\cref{sec: experiments}) sampled at $5000$ points in $[-1, 1]^2$. Our INR function is a standard sinusoidal activation SIREN network, with $128$ hidden neurons, $3$ layers, and $\omega_0=30$. Training uses the Adam optimizer with a learning rate of $0.01$ and batch size of $10^3$ for $10$ epochs. These hyperparameters were chosen to enable fast convergence (on the order of seconds), as this training process is repeated $10^4$ times in total. All INRs were trained using the same weight initialization for easier downstream embedding \citep{de2023deep}.

\paragraph{Latent embeddings (\texttt{inr2vec}).} After producing the dataset of INRs, we train one latent embedder over the training split. We reuse the architecture of \citet{de2023deep} to train a model which takes in all MLP weights for the input INR for completeness (as opposed to just the hidden layer weights). Our latent embedding dimension is $1024$--all other architecture hyperparameters are as in \citet{de2023deep}. We use the Adam optimizer for $50$ epochs with a learning rate of $10^{-4}$, weight decay of $10^{-3}$, and batch size of $16$.

\paragraph{Structural transport nets.} When learning a structural transport net, we have a fixed decoder $D$ and fixed mirrored algebra $\M$. The only learned parameters are those defining the map $\iso: \Rl \to \Rl$, which must be a bijection in order to achieve transport of structure and the isomorphism $\L \cong \M$. We parameterize $\iso$ as a series of additive coupling layers to enable easy differentiation through both the forward and inverse maps. We reuse the seminal NICE architecture, reducing the number of additive coupling layers to $2$ and the number of nonlinear layers to $3$ for efficiency \citep{dinh2014nice}.
Since our application requires differentiation through the function inverse, other architectures which rely on solving fixed-point iterations to compute inverses are not considered \citep{behrmann2019invertible}.
Training runs for $10$ epochs using Adam with a learning rate of $10^{-3}$.

\paragraph{Direct latent space-parameterized baselines.} Latent space-parameterized baselines are instantiated as functions $f_{\cap}, f_{\cup} \colon \Rl \times \Rl \to \Rl$. As described in \cref{sec: experiments}, we compare two parameterizations:
\begin{enumerate}
    \item Let each $f$ be realized by a standard multilayer perceptron (MLP) with ReLU nonlinearities, operating on the vector concatenation of the inputs.
    \item Let each $f$ take the form
    \[
        f(z_1, z_2) = h(g(z_1) + g(z_2)),
    \]
    with $g \colon \Rl \to \R^{256}$ and $h \colon \R^{256} \to \Rl$ MLPs.
\end{enumerate}

Notably, the second option satisfies commutativity via commutativity of the operation $+$, and thus satisfies two of the distributive lattice laws. We perform a hyperparameter sweep for each parameterization over the learning rates $\{10^{-4},10^{-3},10^{-2}\}$, layer counts $\{2,3,4\}$, and hidden dimension $\{64,128,256,512\}$; the final optimal hyperparameters set the learning rate to $10^{-4}$, layer count to $2$, and hidden dimension to $256$.

\paragraph{Computed metrics.} We discuss a few details of our reported metrics. While the IoU metric is standard, it can occasionally be undefined if the union volume is zero; we exclude these datapoints from our computed statistics. We also tested an accuracy metric
\[
    \acc(\Upred, \Utrue) = \E \nolimits_{u} \left[ \I_{\Upred}(u) \stackrel{?}{=} \I_{\Utrue}(u) \right],
\]
where $\stackrel{?}{=}$ outputs $1$ for equality and $0$ otherwise. The results were qualitatively similar to that of the IoU metric so we omit them for simplicity.

\subsection{Procedures}
\paragraph{Random term generation.} \cref{sec: method} and \cref{sec: experiments} describe a procedure for generating random terms of a particular type. \cref{alg: rand_expr} provides more detailed pseudocode for the set distributive lattice type $\FD$. We also work through an example for $\ell=5$ starting symbols.
\begin{alignat*}{6}
    v && \qquad w && \qquad x && \qquad y && \qquad z&& \\
    v && \qquad w && \qquad x && \qquad (y && \lor \;\; z&&) \\
    v && \qquad (w && \land \;\; x&&) \; \qquad (y && \lor \;\; z&&) \\
    (v && \lor \;\; (w && \land \;\; x &&)) \qquad (y && \lor \;\; z&&) \\
    (v && \lor \;\; (w && \land \;\; x &&)) \; \land \;\; (y && \lor \;\; z&&) \\
\end{alignat*}
The final line is the finished random term.

\paragraph{Equivalent term generation.}
The experiment in \cref{sec: equivalent_expr} outlines a scheme for randomly generating a term $q(x_1, \dots, x_{\ell})$ which is equivalent to some original term $p(x_1, \dots, x_{\ell})$. At a high level, this works by successively substituting terms from distributive lattice laws. \cref{alg: equiv_expr} provides informal pseudocode for this procedure, and rests on the idea of applying laws to manipulate terms; a formal treatment of this is outside the scope of our paper, and we will explain our ideas informally.

Whether a law \emph{applies} to a particular expression and how a law is then \emph{applied} to that expression depends on which law is considered. We provide a concrete example for the distributive laws, with other laws following similarly.

Consider the distributive laws
\begin{align} \label{eq: dist_sub}
    x \lor (y \land z) = (x \lor y) \land (x \lor z) \quad \text{and} \quad
    x \land (y \lor z) = (x \land y) \lor (x \land z).
\end{align}
Let $X$, $Y$, and $Z$ be unspecified terms---either single variables, or more complicated compound terms---and consider a particular term $\hat{p}(X,Y,Z)$. Then we say that the distributive law \emph{applies} to $\hat{p}(X,Y,Z)$ if $\hat{p}(X,Y,Z)$ is of one of the following forms:
\begin{alignat}{2} \label{eq: dist_check}
    &X \lor (Y \land Z) \quad \text{or} \quad &&(X \lor Y) \land (X \lor Z) \quad \text{or} \\
    &X \land (Y \lor Z) \quad \text{or} \quad &&(X \land Y) \lor (X \land Z). \nonumber
\end{alignat}
If this is the case, we say that the distributive law is then \emph{applied} to $\hat{p}(X,Y,Z)$ by applying the appropriate symbolic substitution to $\hat{p}(X,Y,Z)$ from \eqref{eq: dist_sub}. If multiple forms from \eqref{eq: dist_check} are appropriate, one is chosen at random.

\cref{alg: equiv_expr} essentially randomly applies a certain number of these transformations to random subterms of a particular starting term, yielding a chain of equivalent terms. We provide an example of such a chain here, starting from the operational polynomial $p(x,y,z) = x \land (y \lor z)$ and proceeding for three steps:
\begin{align*}
    x \land (y \lor z) &= x \land (z \lor y) \tag{commutativity}\\
    &= (x \land z) \lor (x \land y) \tag{distributivity}\\
    &= ((x \land (x \lor y)) \land z) \lor (x \land y) \tag{absorption} \\
    &\eqqcolon q(x,y,z).
\end{align*}

In our self-consistency experiments, we then compare the realizations of our learned operations on the final term $q(x,y,z)$ against those of the starting term $p(x,y,z)$.

\begin{center}
\vspace*{1cm}    
\begin{minipage}{.9\linewidth}
\begin{algorithm}[H]
\caption{Random term generation}
\label{alg: rand_expr}
\DontPrintSemicolon
\KwIn{Number of symbols $\ell$}
\KwOut{Term $p(x_1, \dots, x_{\ell})$}

$\hat{P} \leftarrow (x_1, \dots, x_{\ell})$ \Comment*[r]{Initialize list of terms}

\While{$\textup{length}(\hat P) > 1$}{
    $i \leftarrow$ random integer in $\{1, \dots, \text{length}(\hat P)\}$
    
    $\hat p_1 \leftarrow \text{pop}(\hat P, i)$ \Comment*[r]{Extract $i$th term and remove from list}
    
    $j \leftarrow$ random integer in $\{1, \dots, \text{length}(\hat P)\}$
    
    $\hat p_2 \leftarrow \text{pop}(\hat P, j)$ \Comment*[r]{Extract $j$th term and remove from list}
    
    sym $\leftarrow$ random operation symbol in $\{\cap, \cup\}$ \Comment*[r]{Select operation symbol}
    
    $\hat q \leftarrow \hat p_1\ \text{sym}\ \hat p_2$ \Comment*[r]{Apply operation symbol}
    
    append($\hat P$, $\hat q$) \Comment*[r]{Append new term to list}
}
\Return $\hat P[1]$ \Comment*[r]{Return remaining term in list}
\end{algorithm}
\end{minipage}
\end{center}
\vspace*{1.3cm}

\begin{center}
\begin{minipage}{.9\linewidth}
\SetKwComment{Comment}{\# }{}
\begin{algorithm}[H]
\caption{Equivalent term generation}
\label{alg: equiv_expr}
\DontPrintSemicolon
\KwIn{Starting term $p(x_1, \dots, x_{\ell})$, number of law applications $J$}
\KwOut{Equivalent term $q(x_1, \dots, x_{\ell})$}

\For{$\_ \leftarrow 1$ \KwTo $J$}{
    Recursively extract all subterms of $p$ into a collection (tree search): \\
    $\hat P \leftarrow (\hat p_1, \dots, \hat p_n)$ \Comment*[r]{Specific subterms in $p$}
    $\;\; \hat I \leftarrow (\hat i_1, \dots, \hat i_n)$ \Comment*[r]{Indices for each $\hat p$ in $p$}

    $\textup{laws} \leftarrow (\textup{``associativity''}, \textup{``commutativity''}, \textup{``absorption'', \textup{``distributivity''}})$
    
    Randomly shuffle $\hat P$ and $\hat I$ (with the same shuffle)

    Randomly shuffle $\textup{laws}$
    
    \For{$j \leftarrow 1$ \KwTo $\textup{length}(\textup{laws})$}{
        law $\leftarrow$ laws$[j]$

        \For{$k \leftarrow 1$ \KwTo $\textup{length}(\hat P)$}{
            $\hat p, \hat i$ $\leftarrow$ $\hat P[k]$, $\hat I[k]$
            
            \If{\textup{law} applies to $\hat p$}{
                $\hat q \leftarrow \textup{law}$ applied to $\hat p$\;

                $p \leftarrow \hat q$ substituted for $\hat p$ at $\hat i$ in $\hat P$
            }
        }
    }
}
\Return $\hat P$
\end{algorithm}
\end{minipage}
\end{center}

\clearpage
\subsection{Additional visualizations}  \label{app: experiment_plots} 
This section provides additional tables and graphics for our experiments.

\paragraph{Table of operations.}
\cref{tab: algebra_long} lists all combinations of candidate operations that we trained. For each combination, we numerically tested whether the $8$ distributive lattice laws in \cref{tab: lattice_laws} are satisfied. Summing the number of satisfied laws for a particular pair of combination yields the count (\#) column.

The symmetric directly parameterized baseline satisfies two laws (commutativity in both directions), while the naive MLP satisfies no laws. These are not listed in the table.

\begin{table*}[ht]
\centering
\caption{\label{tab: algebra_long} Candidate operations on the mirrored space with the satisfied distributive lattice laws. Due to distributive lattice symmetries, we have two laws for each column (e.g., $a \cap^{\M} b = b \cap^{\M} a$ and $a \cup^{\M} b = b \cup^{\M} a$). The first row imposes a Riesz algebra structure on the mirrored space. The second column counts how many laws are satisfied by a particular pair of candidate operations.
\vspace*{0.2cm}
}

\begin{tabular}{llrcccc}
\toprule
 Operations                        &                                   &   \# & Commutativity ($^*$)   & Associativity ($^*$)   & Absorption ($^*$)   & Distributivity ($^*$)   \\
\midrule
 $\cap^{\M} = \max$                & $\cup^{\M} = \min$                &    8 & \cmark \cmark          & \cmark \cmark          & \cmark \cmark       & \cmark \cmark           \\
 $\cap^{\M} = \max$                & $\cup^{\M} = \odot$               &    6 & \cmark \cmark          & \cmark \cmark          & \xmark \cmark       & \cmark \xmark           \\
 $\cap^{\M} = \min$                & $\cup^{\M} = +$                   &    6 & \cmark \cmark          & \cmark \cmark          & \xmark \cmark       & \cmark \xmark           \\
 $\cap^{\M} = \max$                & $\cup^{\M} = +$                   &    5 & \cmark \cmark          & \cmark \cmark          & \xmark \xmark       & \cmark \xmark           \\
 $\cap^{\M} = \min$                & $\cup^{\M} = \odot$               &    5 & \cmark \cmark          & \cmark \cmark          & \xmark \xmark       & \cmark \xmark           \\
 $\cap^{\M} = \min$                & $\cup^{\M} = +_s$                 &    5 & \cmark \cmark          & \cmark \xmark          & \xmark \cmark       & \cmark \xmark           \\
 $\cap^{\M} = +$                   & $\cup^{\M} = \odot$               &    5 & \cmark \cmark          & \cmark \cmark          & \xmark \xmark       & \cmark \xmark           \\
 $\cap^{\M} = \max$                & $\cup^{\M} = +_s$                 &    4 & \cmark \cmark          & \cmark \xmark          & \xmark \xmark       & \cmark \xmark           \\
 $\cap^{\M} = \min$                & $\cup^{\M} = \times_{\text{mat}}$ &    4 & \cmark \xmark          & \cmark \cmark          & \xmark \cmark       & \xmark \xmark           \\
 $\cap^{\M} = +$                   & $\cup^{\M} = \times_{\text{mat}}$ &    4 & \cmark \xmark          & \cmark \cmark          & \xmark \xmark       & \cmark \xmark           \\
 $\cap^{\M} = \odot$               & $\cup^{\M} = +_s$                 &    4 & \cmark \cmark          & \cmark \xmark          & \xmark \xmark       & \xmark \cmark           \\
 $\cap^{\M} = \max$                & $\cup^{\M} = -$                   &    3 & \cmark \xmark          & \cmark \xmark          & \xmark \cmark       & \xmark \xmark           \\
 $\cap^{\M} = \max$                & $\cup^{\M} = \times_{\text{mat}}$ &    3 & \cmark \xmark          & \cmark \cmark          & \xmark \xmark       & \xmark \xmark           \\
 $\cap^{\M} = \max$                & $\cup^{\M} = +_c$                 &    3 & \cmark \xmark          & \cmark \xmark          & \xmark \xmark       & \cmark \xmark           \\
 $\cap^{\M} = \min$                & $\cup^{\M} = +_c$                 &    3 & \cmark \xmark          & \cmark \xmark          & \xmark \xmark       & \cmark \xmark           \\
 $\cap^{\M} = +$                   & $\cup^{\M} = +_s$                 &    3 & \cmark \cmark          & \cmark \xmark          & \xmark \xmark       & \xmark \xmark           \\
 $\cap^{\M} = \odot$               & $\cup^{\M} = \times_{\text{mat}}$ &    3 & \cmark \xmark          & \cmark \cmark          & \xmark \xmark       & \xmark \xmark           \\
 $\cap^{\M} = +_s$                 & $\cup^{\M} = \times_{\text{mat}}$ &    3 & \cmark \xmark          & \xmark \cmark          & \xmark \xmark       & \cmark \xmark           \\
 $\cap^{\M} = \min$                & $\cup^{\M} = -$                   &    2 & \cmark \xmark          & \cmark \xmark          & \xmark \xmark       & \xmark \xmark           \\
 $\cap^{\M} = +$                   & $\cup^{\M} = -$                   &    2 & \cmark \xmark          & \cmark \xmark          & \xmark \xmark       & \xmark \xmark           \\
 $\cap^{\M} = +$                   & $\cup^{\M} = +_c$                 &    2 & \cmark \xmark          & \cmark \xmark          & \xmark \xmark       & \xmark \xmark           \\
 $\cap^{\M} = -$                   & $\cup^{\M} = \odot$               &    2 & \xmark \cmark          & \xmark \cmark          & \xmark \xmark       & \xmark \xmark           \\
 $\cap^{\M} = \odot$               & $\cup^{\M} = +_c$                 &    2 & \cmark \xmark          & \cmark \xmark          & \xmark \xmark       & \xmark \xmark           \\
 $\cap^{\M} = -$                   & $\cup^{\M} = +_s$                 &    1 & \xmark \cmark          & \xmark \xmark          & \xmark \xmark       & \xmark \xmark           \\
 $\cap^{\M} = -$                   & $\cup^{\M} = \times_{\text{mat}}$ &    1 & \xmark \xmark          & \xmark \cmark          & \xmark \xmark       & \xmark \xmark           \\
 $\cap^{\M} = +_s$                 & $\cup^{\M} = +_c$                 &    1 & \cmark \xmark          & \xmark \xmark          & \xmark \xmark       & \xmark \xmark           \\
 $\cap^{\M} = \times_{\text{mat}}$ & $\cup^{\M} = +_c$                 &    1 & \xmark \xmark          & \cmark \xmark          & \xmark \xmark       & \xmark \xmark           \\
 $\cap^{\M} = -$                   & $\cup^{\M} = +_c$                 &    0 & \xmark \xmark          & \xmark \xmark          & \xmark \xmark       & \xmark \xmark           \\
\bottomrule
\end{tabular}

\end{table*}

\newpage
\paragraph{Operation performance and self-consistency scatter plots.}
We supplement our plots in \cref{fig: line} with two in-depth scatter plots which elucidate interesting details in the data.

\cref{fig: scatter_a} presents the same \cref{sec: sat_laws} experimental data as in \cref{fig: line_a}. Namely, \cref{fig: scatter_a} teases apart the influence of the length $\ell$ of the random symbol term, which is simply averaged in \cref{fig: line_a}. The IoU performance metric, originally on the $y$-axis of \cref{fig: line_a}, is now visualized in the color bar. The $y$-axis of \cref{fig: scatter_a} starts with two-symbol terms (either $x \cup y$ or $y \cup x$) up to terms with $10$ symbols. Each ``column'' in \cref{fig: scatter_a} corresponds to one specific choice of algebra operations, which is represented as just a dot in \cref{fig: line_a}. We annotate a few explicitly in both plots. These algebras are then grouped by how many distributive lattice laws they satisfy, which can be found in \cref{tab: algebra_long}.

\cref{fig: scatter_a} shows a moderate performance degradation as the length of the random term increases. This is less pronounced for methods with a low IoU to begin with, but can be more easily seen with better-performing methods, especially in the algebras with $5$ and $6$ satisfied laws. These algebras perform similarly to the Riesz algebra for terms with only a few symbols; however, as the length of the term increases the Riesz algebra emerges as a favorite. We believe this is attributable to the fact that the Riesz algebra exclusively satisfies all the desired distributive lattice laws.

\cref{fig: scatter_b} similarly corresponds to \cref{fig: line_b} and the experiment in \cref{sec: equivalent_expr}. Each column in \cref{fig: scatter_b} is a particular algebra, corresponding to one line in the line plot of \cref{fig: line_b}. The color bar encodes the same self-consistency metric discussed in \cref{sec: equivalent_expr} and plotted on the $y$-axis of \cref{fig: line_b}. However, the $y$-axis of \cref{fig: scatter_b} corresponds to the $x$-axis of \cref{fig: line_b}, and the $x$-axis of \cref{fig: scatter_b} simply serves to arrange the different algebras in order with their alignment to the source algebra.

\cref{fig: scatter_b} shows that all methods besides the Riesz algebra struggle to maintain self-consistency for equivalent terms. Naturally, for unchanged terms ($0$ on the $y$-axis) all methods are deterministic and hence self-consistent. The Riesz algebra is also perfectly self-consistent for all equivalent terms, as a direct consequence of \cref{thm: source_latent}. All other algebras degrade significantly as the equivalent terms become more complex. This stems from the fact that, unlike the Riesz algebra, all other operations must implicitly learn the underlying source algebra laws. This is always an inexact process and produces compounding errors for more complex terms.

\begin{figure*}[ht]
\centering
\begin{subfigure}[b]{0.48\linewidth}
    \centering
    \includegraphics[width=\linewidth]{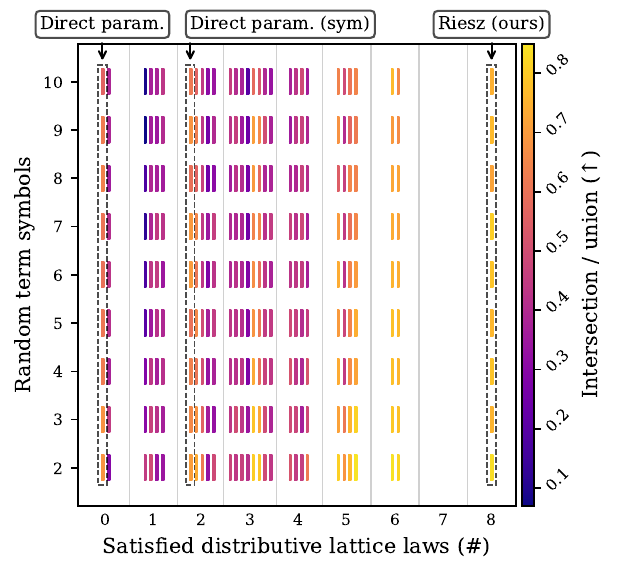}
    \caption{\label{fig: scatter_a}}
\end{subfigure}
\quad
\begin{subfigure}[b]{0.48\linewidth}
    \centering
    \includegraphics[width=\linewidth]{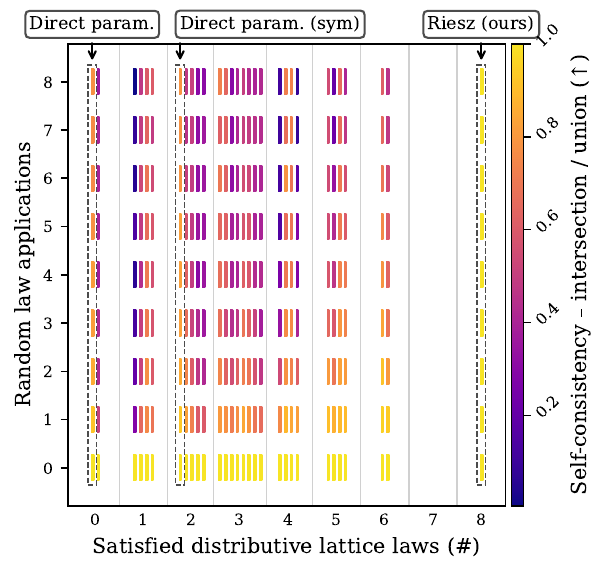}
    \caption{\label{fig: scatter_b}}
\end{subfigure}

\caption{\label{fig: scatter} \subref{fig: scatter_a}) Average learned operation performance, plotted on the color bar, against number of satisfied distributive lattice laws and random term length. \subref{fig: scatter_b}) Average self-consistency, plotted on the color bar, against number of satisfied distributive lattice laws and number of random law applications.}
\end{figure*}

\newpage
\paragraph{Breakdown of specific laws.}
This series of plots breaks down the \cref{sec: sat_laws} data, plotted in \cref{fig: line_a} and \cref{fig: scatter_a}, at a more granular level. Instead of only considering the aggregate number of laws satisfied, \cref{fig: group} analyzes the satisfaction of specific laws and its impact on learned operation performance. 

Each point in any of the \cref{fig: group} subplots represents a single algebra, with its color encoding the mean operation performance averaged over different term lengths, as in the $y$-axis of \cref{fig: line_a}. For two particular distributive lattice laws, we group the algebras by the degree to which they satisfy each of the laws: this can include both of the forms in \cref{tab: lattice_laws} (\cmark \cmark), just one or the other (\cmark \xmark / \xmark \cmark), or neither (\xmark \xmark). As with the other plots, we annotate our best-performing structural transport net and the two direct latent parameterization baselines.

\cref{fig: group} broadly shows that satisfaction of any laws is positively related to learned operation performance. Specifically, moving from left-to-right and from bottom-to-top generally results in an increased average IoU score. However, conditioned on a particular law being satisfied, certain other laws seem to be less important. We enumerate a few such observations here:
\begin{enumerate}
    \item Conditioned on the level of distributivity, increasing absorption seems to yield minimal benefits (\cref{fig: group_a}).
    \item Conditioned on the level of commutativity, increasing absorption seems to yield minimal benefits (\cref{fig: group_d}).
    \item Completely satisfying both commutativity and associativity is highly indicative of strong performance, but only partial satisfaction of either results in substantial degradation (\cref{fig: group_e}).
\end{enumerate}

Certain combinations of law satisfaction were not covered by our set of candidate operations. For example, no operations in $\C \times \C$ resulted in an algebra which was completely distributive and not at all absorptive \cref{fig: group_a}. Future work can consider more sophisticated methods for constructing a set of candidate operations $\C$ which results in complete coverage of all possible law satisfactions.

\begin{figure*}[ht]
\centering
\newcommand{\figurewidth}{0.45\linewidth}
\newcommand{\compressspace}{\vspace{-0.1cm}}
\begin{subfigure}[b]{\figurewidth}
    \centering
    \includegraphics[width=\linewidth]{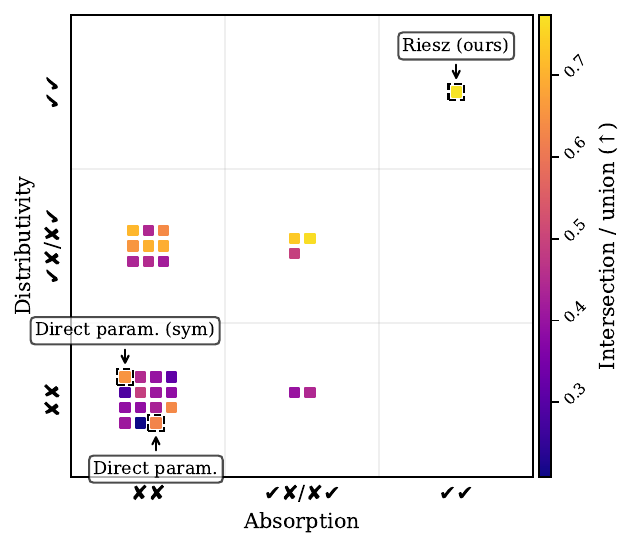}
    \caption{\label{fig: group_a}}
\end{subfigure}
\quad
\begin{subfigure}[b]{\figurewidth}
    \centering
    \includegraphics[width=\linewidth]{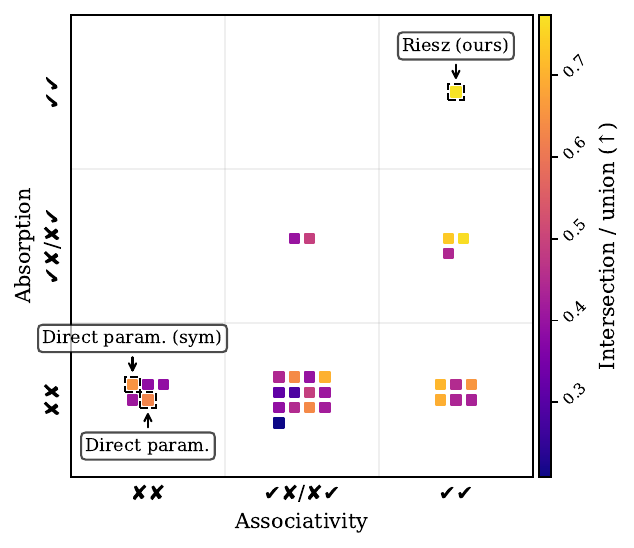}
    \caption{\label{fig: group_b}}
\end{subfigure} \\
\begin{subfigure}[b]{\figurewidth}
    \centering
    \includegraphics[width=\linewidth]{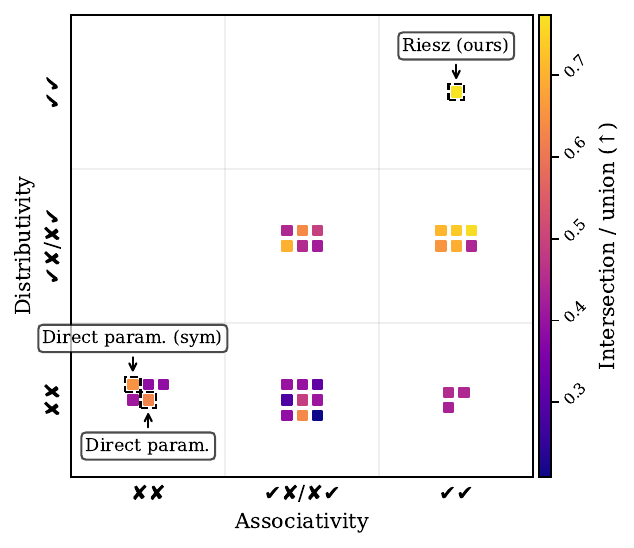}
    \caption{\label{fig: group_c}}
\end{subfigure}
\quad
\begin{subfigure}[b]{\figurewidth}
    \centering
    \includegraphics[width=\linewidth]{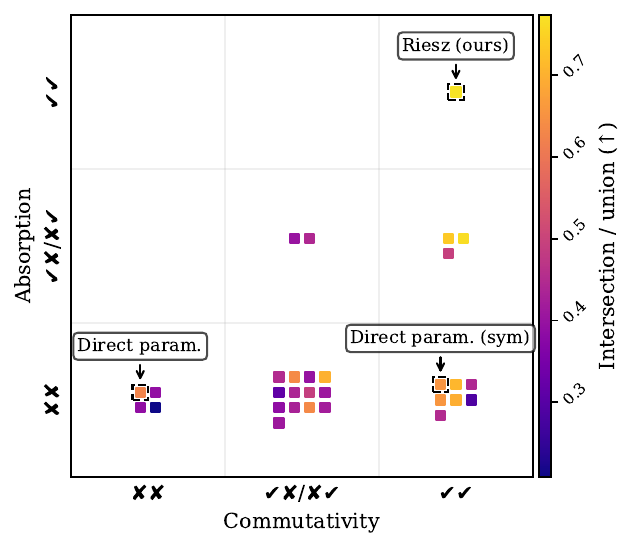}
    \caption{\label{fig: group_d}}
\end{subfigure} \\
\begin{subfigure}[b]{\figurewidth}
    \centering
    \includegraphics[width=\linewidth]{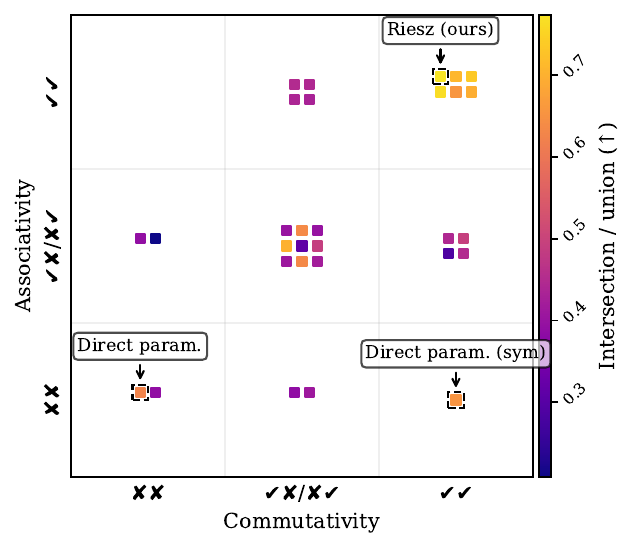}
    \caption{\label{fig: group_e}}
\end{subfigure}
\quad
\begin{subfigure}[b]{\figurewidth}
    \centering
    \includegraphics[width=\linewidth]{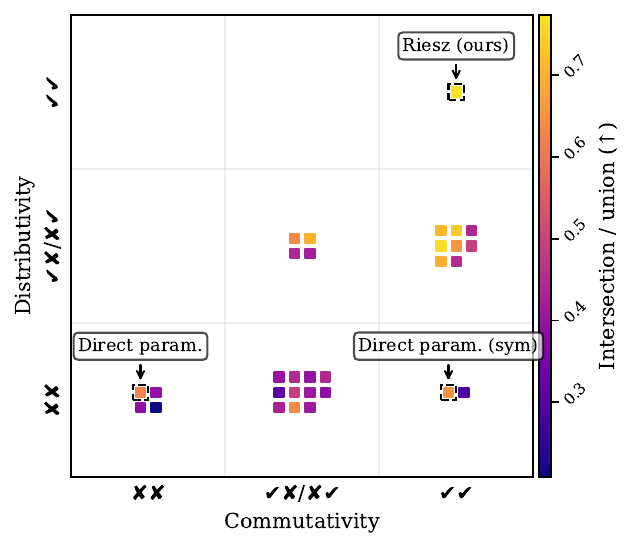}
    \caption{\label{fig: group_f}}
\end{subfigure} \\

\caption{\label{fig: group} The performance of different algebras, grouped by their satisfaction of specific law combinations.}
\end{figure*}

\end{document}